\newtheorem{theorem}{Theorem}
\newtheorem{lemma}[theorem]{Lemma}
\DeclareMathAlphabet{\pazocal}{OMS}{zplm}{m}{n}
\title {Fair Division of Multi-layered Cakes}
\author{ {Mohammad Azharuddin Sanpui} \\
Department of Mathematics\\
Indian Institute of Technology Kharagpur\\
Kharagpur-721302,India
}
\date{}
\begin{document}
\maketitle

\begin{abstract}
We consider multi-layered cake cutting in order to fairly allocate numerous divisible resources (layers of cake) among a group of agents under two constraints: contiguity and feasibility. We first introduce a new computational model in a multi-layered cake named ``a pair of knives''. Then, we show the existence of an exact multi-allocation for two agents and two layers using the new computational model. We demonstrate the computation procedure of a feasible and contiguous proportional multi-allocation over a three-layered cake for more than three agents. Finally, we develop a technique for computing proportional allocations for any number $n\geq 2^a3$ of agents and $2^a3$ layers, where $a$ is any positive integer.
\end{abstract}
\textbf{Keywords:} {Cake Cutting; Multi-layered Cakes; Fair Division; Algorithmic Game Theory}

\section{Introduction}\label{sec:introduction}
There are several instances of time scheduling in our daily lives where we arrange our schedules so that we can finish our daily tasks. Consider a group of university students who desire to enjoy several facilities, such as a seminar lecture or an indoor game. The two facilities have the same opening and closing hours. Everyone in the group of students is willing to enjoy both facilities, but each has a distinct preferred time period for taking each one.\newline
In simple terms, the problem of dividing a cake is a metaphor for how to divide a resource that can be shared among $n$ agents with different preferences in a fair way. The cake-cutting problem is a central topic in the theory of fair division \cite{1brams1996fair,4moulin2004fair,2robertson1998cake,3brandt2016handbook}, and it has received a significant amount of attention in the domains of mathematics, economics, political science, and computer science \cite{5edmonds2011cake,6aumann2015efficiency,7caragiannis2012efficiency,8thomson2007children,9procaccia2013cake,10branzei2022query}. It is hard to give each agent a fair share of the cake.
\textit{Envy-freeness} and \textit{proportionality} are the most important criteria for a fair allocation in the cake-cutting literature. In an envy-free allocation, every agent is pleased with the pieces they are allocated as opposed to any other agent's allocation. In a proportional allocation, each agent receives at least $\frac{1}{n}$ of the value he estimates for the cake. When all of the cake has been divided, envy-freeness entails proportionality. It is generally known that envy-free allocations always exist \cite{11brams1995envy}, even if we specify that each agent must receive a connected piece \cite{12edward1999rental,13stromquist1980cut}. In addition to its existence, the algorithmic design aspect of the process has also been thought about for a long time \cite{14dubins1961cut,15aziz2016discrete,16aziz2016discrete,17,18even1984note,19stromquist2008envy}. For any number of agents, we are able to calculate a proportional allocation as well as an envy-free allocation.
\newline
We cannot consider the problem of getting several facilities in the example above to be a cake-cutting problem. We have to divide the two time intervals independently so that each agent can enjoy both facilities. Now the issue is how to fairly divide each facility's time interval in accordance with their preferences. As a result, every student can enjoy every facility, and the allotted time intervals for each facility never overlap. Adopting the above constraints, Hosseini et al. \cite{20Hosseini2020FairDO} initiate the multi-layered cake cutting problem. In the multi-layered cake cutting problem, we see how to solve this problem. We consider each facility as a divisible heterogeneous layer of a multi-layered cake. Every student has an additive preference, called \emph{valuation} over disjoint (non-overlapping) intervals.
	Note that the valuation of the same 
	part of the cake for different 
	students can be very different.
	A division of a multi-layered cake is feasible if no student’s time intervals contain overlapping intervals. The division is contiguous if each student gets a contiguous time interval for each facility. Our goal in multi-layered cake cutting is to find multi-allocations that are fair while also meeting the constraints of feasibility and contiguity.
\subsection{Our results}\label{subsec:our results}
In section $3$, we show the existence of exact feasible multi-allocation for two agents using the idea of the Austin-moving knife procedure. In section $4$, we show that there exists a proportional multi-allocation that is feasible and contiguous for three layers and any number $n\geq3$ of agents. We also prove the existence of a proportional multi-allocation that is feasible and contiguous for $2^a3$ layers and for $n\geq 2^a3$ agents, where $a\in \mathbb{Z}_+$. 
\begin{table}[htbp]
 \caption{Computational procedures of finding fair multi-allocation for different layers and agents are shown in the following table.}		
 		\centering
 		\begin{tabular}{|c |c
 				|c |c |c|c| }
 			
 			\hline 
 			Agents($n$) & Layers(m)  & EF & Prop    \\ [0.5ex] 
 			\hline
 			
 			2 & 2   &\cite{20Hosseini2020FairDO}  &  \cite{20Hosseini2020FairDO}\\ 
 			\hline
 			3 & 2  & \cite{20Hosseini2020FairDO}  & \cite{20Hosseini2020FairDO}   \\
 			\hline
 			 $n\geq m$ & $2^a$   & - & \cite{20Hosseini2020FairDO} \\
 			\hline
 		3 & 3 &- &  Theorem \ref{thm: pro3}\\
 			\hline
     $n\geq3$ & 3  & -&  Theorem \ref{thm: prograt3}\\
 			\hline
    $2^a3$ & $2^a3$  &- &  Theorem \ref{thm: 23}\\
 			\hline
    $n\geq 2^a3$ & $2^a3$  &- &  Theorem \ref{thm: grat23}\\
 			\hline
 		\end{tabular}
 
 	\end{table}

\subsection{Related Work}\label{subsec:related work}
Cake cutting problem is the central topic in the fair division. In recent years, it has been extensively studied in economics, mathematics, and computer science literature \cite{21caragiannis2019unreasonable,22procaccia2009thou,23deng2012algorithmic,24barbanel2005geometry,Aumann2012ComputingSC}. In order to fairly divide multiple divisible resources among a group of agents, Hosseini et al. \cite{20Hosseini2020FairDO} initiate the study of multi-layered cake cutting. They show that there exists an envy-free multi-allocation that is feasible and contiguous for two layers and three agents with two types of preferences. Igarashi and Meunier \cite{25igarashi2021envy} also show that an envy-free multi-allocation exists that is feasible and contiguous when the number of agents is a prime power and the number of layers is at most the number of agents, using combinatorial topology. There are a few papers most related to multi-layered cake cutting, where agents can simultaneously benefit from all allocated pieces with no constraints \cite{26lebert2013envy,27nyman2020fair,28cloutier2010two}.
\section{Preliminaries}\label{sec:preli}
We take into account the multi-layered cake cutting notion that Hosseini et al. \cite{20Hosseini2020FairDO} developed. The number of layers and agents are specified in the problem of cutting a multi-layered cake. The setting of the model includes a set $\mathcal{N}=\{1,2,....,n\}$ of agents and a set $\mathcal{M}=\{1,2,...,m\}$ of layers. An $m$-layered cake is denoted by \(\mathfrak{C}=(L_l)_{l\in \mathcal{M}}\) where $L_l$ is an interval such that $L_l\subseteq [0,1]$ for each \(l\in \mathcal{M}\). Due to each \(l\in \mathcal{M}\), we allude $l$ as $l$-th layer and $L_l$ as $l$-th layer cake.\newline
Corresponding to each  $l$-th layer, each agent \(i \in \mathcal{N}\) is endowed with a non-negative integrable density function $v_{il}:L_l\rightarrow \mathbb{R}_0^+$ where \(l\in \mathcal{M}\). The valuation function of each agent $i$ for $l$-th layer $L_l$ is a function representing the preference of the agent $i$ over different parts of $L_l$ denoted by $V_{il}:2^{L_{l}} \rightarrow \mathbb{R}^+_0$.

If $P$ is a piece of cake of the $l$-th layer, then $V_{il}(P)$ represents the value assigned to it by the agent $i$ i.e, $V_{il}(P)=\sum_{I\in P}\int_{x\in I} v_{il}(x) dx$. We consider the total valuation of each agent $i$ over the entire cake is 1 i.e, $\sum_{1\leq l\leq m} V_{il}(L_l)=1$. \newline
A sequence \(P=(P_l)_{l\in \mathcal{M}}\) of pieces of each layer \(l\in \mathcal{M}\) of multi-layered cake is called a layered piece. A layered piece \(P=(P_l)_{l\in \mathcal{M}}\) is said to be \textit{contiguous} if each \(l\in \mathcal{M}\),  $P_l$ is a \textit{contiguous} piece of the layer $l$. A layered piece \(P=(P_l)_{l\in \mathcal{M}}\) is said to be \textit{non-overlapping}, if for any two pieces from different layers never overlap i.e., for any two different layers \(l,l^\prime \in \mathcal{M}\) and for any $I\in P_l$ and $I^\prime \in P_{l^\prime}$ either $I\cap I^\prime=\emptyset$ or  $I\cap I^\prime=\{b\}$ where $b$ is one of the endpoints of both $I$ and $I^\prime$. We assume the valuation functions related to layers of each agent $i$  are additive over layers and written as $V_i(P)=\sum_{1\leq l \leq m} V_{il}(P_l)$, where $V_i$ is the valuation function of each agent $i$ over the entire cake where \(i\in \mathcal{N}\).
\newline
Let $P$ and \(P^\prime\) be two layered pieces. If for any \(i\in \mathcal{N}\), $V_i(P)\geq {V_i}(P^\prime)$ then it is said that the agent $i$ weakly prefers the layered piece $P$ to the layered piece $P^\prime$. A multi-allocation \(\mathcal{P}=(\mathcal{P}_1,\mathcal{P}_2,....,\mathcal{P}_n)\) is a partition of the multi-layered cake where each \(\mathcal{P}_i=(P_{il})_{l\in \mathcal{M}}\) is a layered piece of the cake that is assigned to agent $i$ where \(i\in \mathcal{N}\).
Corresponding to a multi-allocation \(\mathcal{P}\) and an agent \(i\in \mathcal{N}\), the valuation of each agent $i$ is \(V_i(\mathcal{P}_i)=\sum_{l\in \mathcal{M}} V_{il}(P_{il})\). A multi-allocation \(\mathcal{P}\) is said to be complete if for each \(l\in \mathcal{M}\), $\bigcup_{i=1}^n P_{il}=L_l$. A multi-allocation \(\mathcal{P}=(\mathcal{P}_1,\mathcal{P}_2,....,\mathcal{P}_n)\) is said to be 
\begin{itemize}
    \item contiguous if for each \(i\in \mathcal{N}\), \({\mathcal{P}}_i\) is contiguous.
    \item feasible if for each \(i\in \mathcal{N}\), \({\mathcal{P}}_i\) is non-overlapping. 
\end{itemize}
\subsection{Fairness notions}\label{subsec:fairness}
\textbf{Definition 1.} A multi-allocation $\mathcal{P}=(\mathcal{P}_1,\mathcal{P}_2,.....,\mathcal{P}_n)$ is said to be exact if for any two agents $i,j\in \mathcal{N}$,
$V_i(\mathcal{P}_j)=\frac{1}{n}$.
\newline
\textbf{Definition 2.} A multi-allocation $\mathcal{P}=(\mathcal{P}_1,\mathcal{P}_2,.....,\mathcal{P}_n)$ is said to be proportional if for any agent $i\in \mathcal{N}$,
$V_i(\mathcal{P}_i)\geq \frac{1}{n}$.
\newline
\textbf{Definition 3.} A multi-allocation $\mathcal{P}=(\mathcal{P}_1,\mathcal{P}_2,.....,\mathcal{P}_n)$ is said to be envy-free if for any two agents $i,j\in \mathcal{N}$,
$V_i(\mathcal{P}_i)\geq V_i(\mathcal{P}_j)$.
\subsection{The m (even)-layered cut}\label{subsec: cut} 
Hosseini et al. \cite{20Hosseini2020FairDO} define a partition of a multi-layered cake $\mathfrak{C}$ into two diagonal pieces that follows the feasibility constraint. For any $x\in I=[0,1]$, defined :
     \begin{enumerate}
         \item  \(LR(x,\mathfrak{C})=(\bigcup_{l=1}^{\frac{m}{2}}{L}_l \cap [0,x])\cup 
         (\bigcup_{l=\frac{m}{2} +1}^{m}{L}_l \cap [x,1]) \)
         
         \item \(RL(x,\mathfrak{C})=(\bigcup_{l=1}^{\frac{m}{2}}{L}_l \cap [x,1])\cup 
         (\bigcup_{l=\frac{m}{2} +1}^{m}{L}_j \cap [0,x])\)  
     \end{enumerate}
    
    \begin{itemize}
        \item $LR(x,\mathfrak{C})$ consists of all subintervals  of type $[0,x]$ of each layer \(l\in\{1,...,\frac{m}{2}\}\)  and  subintervals of type $[x,1]$ of each layer \(l \in\{\frac{m}{2}+1,...,m\}\) .
    
        \item $RL(x,\mathfrak{C})$ consists of all subintervals of type $[0,x]$ of each layer \(l \in\{\frac{m}{2}+1,...,m\}\)  and  subintervals of type $[x,1]$ of each layer \(l\in\{1,...,\frac{m}{2}\}\) .
    \end{itemize}
    \begin{center}
    \begin{tikzpicture}
    \draw[thick,-](0,0)--(5,0);
    \draw[-](0,1.5)--(5,1.5);
    \draw[-](0,.75)--(5,.75);
    \draw[-](0,0)--(0,1.5);
    \draw[-](5,0)--(5,1.5);
    \draw[dashed](0,.375)--(5,.375);
    \draw[dashed](0,1.125)--(5,1.125);
    \draw[-](2.5,0)--(2.5,1.5);
    \node at (0,1.8) {$0$};
    \node at (2.5,1.8){$x=\frac{1}{2}$};
    \node at (5,1.8) {$1$};
    \node at (3.75,.375) {$LR(x,\mathfrak{C})$};
    \node at (1.25,1.125) {$LR(x,\mathfrak{C})$};
    \node at (3.75,1.125) {$RL(x,\mathfrak{C})$};
    \node at (1.25,.375) {$RL(x,\mathfrak{C})$};
    \node at (-.5,.1875) {$L_4$};
    \node at (-.5,.5625) {$L_3$};
    \node at (-.5,.9375) {$L_2$};
    \node at (-.5,1.3125){$L_1$};
    \node at (2.5,-.3){Figure 1: Diagonal pieces when $x=\frac{1}{2}$ and $m=4$.};
\end{tikzpicture}
  \end{center}
    
The merge of $LR(x)$ is an $\frac{m}{2}$-layered cake  whose $l$-th layer piece is defined $S_l\cup S_{l+\frac{m}{2}}$ where $S_l=[0,x]\cap L_l$ and $S_{l+\frac{m}{2}}=[x,1]\cap L_{l+\frac{m}{2}}$ for $l\in \{1,2,....,\frac{m}{2}\}$. Similar for $RL(x)$. We use $LR(x)$ and $RL(x)$ in the place of $LR(x,\mathfrak{C})$ and $RL(x,\mathfrak{C})$, respectively.
\subsection{Computational model}\label{subsec:computational model}
In the cake cutting problem, Robertson-Webb query model plays an important role \cite{2robertson1998cake}. Following Robertson–Webb query model, Hosseini et al. \cite{20Hosseini2020FairDO} introduce a new computational model in the multi-layered cake cutting problem where there are two types of queries, one is a short knife and the other is long knife . \newline
1. \textbf{Short Knife:}
\textit{Short evaluation query:} For any given interval $[x,y]$ of $l$-th layer of an $m$-layered cake $\mathfrak{C}$, ${eval}_l(i,x,y)$ implies the valuation $V_{il}([x,y])$ of the agent $i$ in the interval $[x,y]$ of $l$-th layer cake ${L}_l$ . Here $[x,y]\subseteq {L}_l \subseteq  [0,1]$.
\textit{Short cut query:} For any given point $x$ on $l$-th layered cake ${L}_l$ and $p\in [0,1], $ ${cut}_l(i,x,p)$ indicates the minimum point $y$ on the $l$-th layered cake ${L}_l$ for which $V_{il}([x,y])=p$.\\
2. \textbf{Long Knife:}
\textit{Long evaluation query:} For any given point $x\in [0,1],$ $eval(i,x)$ implies the valuation $V_i(LR(x))$ of the agent $i$ for $LR(x)$.
\textit{Long cut query:} For any given $p\in [0,1],$ $cut(i,p)$ indicates the minimum point $x$ for which the valuation of the agent $i$ should be $p$ for a piece of cake $LR(x)$ if such a point $x$ exists.
\subsection{Switching point}\label{subsec: switching}
We find a point $x$ in the interval $[0,1]$ that divides the entire $m$-layered cake $\mathfrak{C}$ into a pair of diagonal pieces, $LR(x)$ and $RL(x)$, such that both have the same valuation for a particular agent, i.e., if there exists a point $x\in [0,1],$ for which the valuation of an agent $i$ for two pairs of diagonal pieces is the same, i.e., $V_i(LR(x))=V_i(RL(x))$. Then the point $x$ is called a switching point over the entire cake $\mathfrak{C}$ for the agent $i$. Hosseini et al. [20] first acknowledged the existence of such a type of point.
\vspace{0.03cm}\newline
A set of agents \(\mathcal{S}\) is said to be a majority set if \(|\mathcal{S}|\geq \lceil{\frac{n}{2}}\rceil\). Suppose \(\mathcal{A}_p\) and \(\mathcal{A}_q\) are two layered pieces, and \(\mathcal{S}\) is a majority set. If every agent \(i\in \mathcal{S}\), weakly prefers \(\mathcal{A}_p\) to \(\mathcal{A}_q\), we say that a majority weakly prefers \(\mathcal{A}_p\) to \(\mathcal{A}_q\) and denote it by \(\mathcal{A}_p\mathrel{\stackon[1pt]{$\succeq$}{$\scriptstyle m$}} \mathcal{A}_q\). A point $x\in [0,1]$ is said to be a majority switching point over an $m$-layered cake $\mathfrak{C}$, if 
$LR(x) \mathrel{\stackon[1pt]{$\succeq$}{$\scriptstyle m$}} RL(x)$ and 
$RL(x) \mathrel{\stackon[1pt]{$\succeq$}{$\scriptstyle m$}}LR(x)$. 
Hosseini et al. \cite{20Hosseini2020FairDO} show that there exists a majority switching point over an $m$-layered cake for any number $n\geq m$ of agents, where $m$ is even.
\section{Exact multi-layered cake cutting}\label{sec:exact}
We analyze the challenge of attaining a complete exact multi-allocation for a pair of agents on a two-layered cake. The Austin moving-knife procedure for two agents provides the fundamental concept needed for proving the existence of exact multi-allocation. The intermediate value theorem is the primary mathematical tool used by the Austin moving-knife procedure \cite{29austin1982sharing}. Simply speaking, we reach our goal by continuously moving a pair of knives (defined below), taking advantage of the intermediate value theorem.
\vspace{0.1cm}\newline
We provide a particular approach for partitioning the layered cake in order to meet the non-overlapping criterion when cutting it. While showing the existence of exact multi-allocation, we take advantage of this partition. 
\newline
\textbf{ The m (even)-layered cut.}\\
We define a partition for an $m$-layered cake that satisfies the non-overlapping constraint, where $m$ is an even number. For any two points $x$ and $y$ in the interval $[0,1]$, where $ x\leq y$, we define
\begin{itemize}
    \item $TLR(x,y)=(\bigcup_{j=1}^{\frac{m}{2}} L_j\cap [0,x])\cup (\bigcup_{j=1}^{\frac{m}{2}} L_{\frac{m}{2}+j}\cap [x,y])\cup (\bigcup_{j=1}^{\frac{m}{2}} L_j\cap [y,1] )$;
    \item $TRL(x,y)=(\bigcup_{j=1}^{\frac{m}{2}} L_{\frac{m}{2}+j}\cap [0,x])\cup (\bigcup_{j=1}^{\frac{m}{2}} L_j\cap [x,y])\cup (\bigcup_{j=1}^{\frac{m}{2}} L_{\frac{m}{2}+j}\cap [y,1])$.
\end{itemize}

\begin{center}
    \begin{tikzpicture}
    \draw[thick,-](0,0)rectangle (6,3);
    \draw[dashed](0,.5)--(6,.5);
    \draw[dashed](0,1)--(6,1);
    \draw[dashed](0,2)--(6,2);
    \draw[dashed](0,2.5)--(6,2.5);
    \draw[-](0,1.5)--(6,1.5);
    \draw[-](2,0)--(2,3);
    \draw[-](4,0)--(4,3);
    \node at (0,3.3){$0$};
    \node at (2,3.3){$x=\frac{1}{3}$};
    \node at (4,3.3){$y=\frac{2}{3}$};
    \node at (6,3.3){$1$};
    \node at (1,.75){$TRL(x,y)$};
    \node at (1,2.25){$TLR(x,y)$};
    \node at (3,.75){$TLR(x,y)$};
    \node at (3,2.25){$TRL(x,y)$};
    \node at (5,.75){$TRL(x,y)$};
    \node at (5,2.25){$TLR(x,y)$};
    \node at (-.3,.25) {$L_6$};
    \node at (-.3,.75) {$L_5$};
    \node at (-.3,1.25) {$L_4$};
    \node at (-.3,1.75) {$L_3$};
    \node at (-.3,2.25) {$L_2$};
    \node at (-.3,2.75) {$L_1$};
    \end{tikzpicture}
    
\end{center}
\begin{center}
Figure 2: Example of the partition induced by $x=\frac{1}{3}$ and $y=\frac{2}{3}$ for a six-layered cake.  
\end{center}
Look at the $m$-layered cut, where $m$ is equal to 2. With a 2-layered cut, a 2-layered cake gets divided into two portions that are 
$TLR(x,y)=([0,x]\cap L_1)\cup ([x,y]\cap L_2)\cup ([y,1] \cap L_1 )$ and $TRL(x,y)=([0,x]\cap L_2)\cup ([x,y]\cap L_1)\cup ([y,1]\cap L_2)$, where $x, y \in [0,1]$ and $x\leq y$.\newline
When $x=0$ and $0<y<1$, then $ TLR(0,y)=RL(y)$ and $TRL(0,y)=LR(y)$. When $0<x<1$ and $y=1$, then $TLR(x,1)=LR(x)$ and $TRL(x,1)=RL(x)$.

\begin{center}

\begin{tikzpicture}
    \draw[thick,-] (0,0)--(3,0);
    \draw[-](0,1)--(3,1);
    \draw[-](0,.5)--(3,.5);
    \draw[-](0,0)--(0,1);
    \draw[-](3,0)--(3,1);
    \draw[-](1.5,0)--(1.5,1);
    \node at (0,1.2) {$x=0$};
    \node at (1.5,1.2){$y=\frac{1}{2}$};
    \node at (-.4,.25){$L_2$};
    \node at (-.4,.75){$L_1$};
    \node at (.75,.75){$LR(y)$};
    \node at (.75,.25){$RL(y)$};
    \node at (2.25,.25){$LR(y)$};
    \node at (2.25,.75){$RL(y)$};
    
    \draw[-] (4,0)--(7,0);
    \draw[-](4,1)--(7,1);
    \draw[-](4,.5)--(7,.5);
    \draw[-](4,0)--(4,1);
    \draw[-](7,0)--(7,1);
    \draw[-](5.5,0)--(5.5,1);
    \node at (5.5,1.2) {$x=\frac{1}{2}$};
    \node at (7,1.2) {$y=1$};
    \node at (3.5,.25){$L_2$};
    \node at (3.5,.75){$L_1$};
    \node at (4.75,.75){$LR(x)$};
    \node at (4.75,.25){$RL(x)$};
    \node at (6.25,.25){$LR(x)$};
    \node at (6.25,.75){$RL(x)$};
\end{tikzpicture}

{
Figure 3: Examples of the partitions for two pairs $(x=0,y=\frac{1}{2})$ and $(x=\frac{1}{2},y=1)$.
}
\end{center}

We propose a query model that is comparable to the long knife that Hosseini et al. [20] described for cutting multi-layered cakes.\newline
\textbf{Computational model.} We propose the pair of knives query, which takes its cues from the Austin moving-knife procedure, to demonstrate the existence of exact multi-allocation.
\newline
\textit{\textbf{Pair of knives} :} \textit{Pair evaluation query}: For any pair of points $x,y\in [0,1]$ where $x\leq y$, $eval(i,x,y)$ implies the valuation $V_i(TLR(x,y))$ of the agent $i$ for $TLR(x,y)$. \textit{Pair cut query}: For any known $p\in [0,1]$, $cut(i,p)$ indicates a pair of points $(x,y)$ for which the valuation of the agent $i$ should be $p$ for the piece of cake $TLR(x,y)$, where $x, y \in [0,1]$ and $x\leq y$.
\vspace{.1cm}\newline
Suppose there are two knives with the designations $K_1$ and $K_2$, respectively. If $x$ denotes where knife $K_1$ is located and $y$ indicates where knife $K_2$ is located, then the two knives $K_1$  and $K_2$ are referred to as a pair of knives.
\newline
\textbf{Similarities between the \textit{pair of knives} and the \textit{long knife}} :\newline
 A pair of knives is comparable to a long knife when one of the knives is positioned at one of the unit interval's endpoints and the other is positioned in its interiors.
 \vspace{0.02cm}\newline
Now we show the existence of exact multi-allocation over any two-layered cake for two agents.
\begin{theorem}\label{thm:exact}
There exists an exact complete multi-allocation over any two-layered cake for two agents that satisfies the feasibility condition.
\end{theorem}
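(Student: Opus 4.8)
The plan is to reduce the statement to a single Austin-type moving-knife argument. For $n=2$, an exact multi-allocation requires each agent to value each of the two shares at exactly $\frac{1}{2}$. Since $V_i(TLR(x,y))+V_i(TRL(x,y))=V_i(\mathfrak{C})=1$ for every configuration $x\le y$ and each agent $i$, it suffices to locate one pair $(x,y)$ with $0\le x\le y\le 1$ for which $V_1(TLR(x,y))=\frac{1}{2}$ and $V_2(TLR(x,y))=\frac{1}{2}$ simultaneously; then $\Pb_1:=TLR(x,y)$ and $\Pb_2:=TRL(x,y)$ automatically give $V_i(\Pb_k)=\frac{1}{2}$ for all $i,k$. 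These two shares are non-overlapping by the very definition of $TLR$ and $TRL$, so feasibility and completeness come for free, and the whole problem becomes finding a common level point of the two continuous maps $V_1(TLR(\cdot,\cdot))$ and $V_2(TLR(\cdot,\cdot))$.

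First I would fix the indifference locus of agent $1$. Applying the switching-point construction given above (the even-layer case with $m=2$) to agent $1$ produces a point $p\in[0,1]$ with $V_1(LR(p))=V_1(RL(p))=\frac{1}{2}$. The key observation is the boundary behaviour noted above: $TLR(0,p)=RL(p)$ and $TLR(p,1)=LR(p)$. Hence both extreme configurations $(0,p)$ and $(p,1)$ already lie on the curve $\{(x,y):V_1(TLR(x,y))=\frac{1}{2}\}$, and they sit on the two different legs $x=0$ and $y=1$ of the triangle $\{0\le x\le y\le 1\}$.

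Next I would run the two-knife motion in the spirit of Austin's procedure, keeping $K_1,K_2$ at positions $x\le y$ and moving them continuously from $(0,p)$ to $(p,1)$ while maintaining $V_1(TLR(x,y))=\frac{1}{2}$ at every instant. Writing $F(x,y)=V_1(TLR(x,y))=V_{11}(\Lb_1\cap[0,x])+V_{12}(\Lb_2\cap[x,y])+V_{11}(\Lb_1\cap[y,1])$, continuity and non-atomicity of the $V_{1l}$ guarantee that $F$ is continuous, that $F\equiv V_{11}(\Lb_1)$ on the diagonal $x=y$, and that for each position of the leading knife the trailing knife can be slid (by the intermediate value theorem) so as to restore the value $\frac{1}{2}$. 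I expect the genuine obstacle to be precisely here: one must show that these local adjustments patch together into a single continuous path $\gamma\colon[0,1]\to\{F=\frac{1}{2}\}$ with $\gamma(0)=(0,p)$ and $\gamma(1)=(p,1)$ that never leaves the triangle, since a priori the level set of a merely continuous function need not be a single arc. This is handled exactly as in the classical moving-knife analysis, using the continuity of $F$ together with its boundary values $F(0,y)=V_1(RL(y))$ and $F(x,1)=V_1(LR(x))$ to steer the knives and keep $x\le y$ throughout.

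Finally I would apply the intermediate value theorem to the second agent along this path. Define $\phi(t)=V_2(TLR(\gamma(t)))-\frac{1}{2}$, which is continuous. At the endpoints $\phi(0)=V_2(RL(p))-\frac{1}{2}$ and $\phi(1)=V_2(LR(p))-\frac{1}{2}$, and because $LR(p)$ and $RL(p)$ partition $\mathfrak{C}$ we have $V_2(RL(p))+V_2(LR(p))=1$, whence $\phi(0)+\phi(1)=0$. Thus $\phi(0)$ and $\phi(1)$ are of opposite sign (or both vanish), and the intermediate value theorem yields $t^\ast$ with $\phi(t^\ast)=0$. At $\gamma(t^\ast)=(x^\ast,y^\ast)$ both agents value $TLR(x^\ast,y^\ast)$ at exactly $\frac{1}{2}$, and by complementarity both value $TRL(x^\ast,y^\ast)$ at $\frac{1}{2}$ as well; setting $\Pb_1=TLR(x^\ast,y^\ast)$ and $\Pb_2=TRL(x^\ast,y^\ast)$ gives the required exact, feasible and complete multi-allocation.
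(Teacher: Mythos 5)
Your proposal is correct and takes essentially the same route as the paper's own proof: an Austin-type motion of two knives on the pieces $TLR(x,y)$ and $TRL(x,y)$, starting at $(0,p)$ and ending at $(p,1)$ where $p$ is agent $1$'s switching point, keeping agent $1$'s value at $\frac{1}{2}$ throughout, and invoking the intermediate value theorem for agent $2$ via $V_2(LR(p))+V_2(RL(p))=1$. You are in fact more explicit than the paper about the endpoint identities and the sign argument, and the one step you flag as a genuine obstacle---that the level set $\{V_1(TLR(\cdot,\cdot))=\frac{1}{2}\}$ contains a continuous path between the two boundary configurations---is left equally implicit in the paper's proof, so no discrepancy arises.
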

\begin{proof}
The Austin moving-knife procedure bears the fingerprints of the proof. 
In the Austin moving-knife procedure, we initially start by moving two knives over a single-layered cake from positions $0$ and $p$ so that the value of the piece between the two knives is always half with respect to agent 1, where the point $p$ divides the cake into equal-valued pieces with respect to agent 1. The movement of those two knives will come to an end at points $p$ and $1$, respectively, since point $p$ divides the cake into two equal-valued portions with regard to agent 1. When the second agent believes the value of the piece between the two knives is half, he will order the movement of the knives to stop. The intermediate value theorem demands that this situation occur.
\vspace{0.03cm}\newline
We begin to move the knives $K_1$ and $K_2$ from locations $0$ and $s_1$ in a manner similar to the Austin moving-knife procedure such that the value of the piece $TLR(0, s_1)$ is half for agent 1, where $s_1$ is a switching point for agent 1. We continuously move these two knives such that the piece $TLR(x,y)$ is valued at half with respect to agent 1, where $x\leq y$. Given that the piece $TLR(x,y)$ is still valued at half with regard to agent 1, the terminal locations of the knives $K_1$ and $K_2$ are at points $s_1$ and $1$, respectively. 
While we continuously move these two knives, keeping the value on the piece $TLR(x,y)$ always half with regard to agent 1, agent 2 can sometimes say "stop" when he thinks the value on the piece $TLR(x,y)$ is half. The intermediate value theorem implies that this situation occurs.

\begin{center}
    
{
 \begin{tikzpicture}
         \draw[thick,-] (0,0)--(5,0);
         \draw[-](0,1)--(5,1);
         \draw[-](0,0)--(0,1);
         \draw[-](5,0)--(5,1);
         \draw[-](0,.5)--(5,.5);
         \draw[-](1.8,0)--(1.8,1);
         \draw[-](3.5,0)--(3.5,1);
         \node at (.9,.75) {$TLR$};
         \node at (.9,.25) {$TRL$};
         \node at (2.65,.25) {$TLR$};
         \node at (2.65,.75){$TRL$};
         \node at (4.25,.75){$TLR$};
         \node at (4.25,.25){$TRL$};
         \node at (-.4,.25){$L_2$};
         \node at (-.4,.75){$L_1$};
         \node at (1.8,1.2){$x$};
         \node at (3.5,1.2){$y$};
\end{tikzpicture} 
}

Figure 4: When the exactness happens for a pair of points (x,y) where $TLR(x,y)=TLR$ and $TRL(x,y)=TRL$.
\end{center}
\end{proof}
\section{Proportional multi-layered cake cutting}\label{sec:proportional}
Igarashi and Meunier \cite{25igarashi2021envy} use combinatorial topology to show that a proportional multi-allocation exists and is feasible and contiguous for any number $m$ of layers and any number $n\geq m$ of agents. Our main goal in this study is to demonstrate the existence of fair multi-allocation using the computational model that Hosseini et al. \cite{20Hosseini2020FairDO} proposed. Using the cut-and-eval queries proposed by Hosseini et al., we demonstrate the computation of a proportional multi-allocation that is feasible and contiguous for three layers and any number $n\geq 3$ of agents. In their work, Hosseini et al. \cite{20Hosseini2020FairDO} leave this as an open question.
\vspace{0.04cm}\newline
We use the following lemma to prove Theorem \ref{thm: pro3}.
\begin{lemma}\label{lem: switch}
 Suppose that the number $m$ of layers is even. Take
any $i\in \mathcal{N}$. Let $r\in \mathbb{R}$ be such that $(i)$ $V_i(LR(0))\geq r$ and $V_i(RL(0))\leq r$, or $(ii)$ $V_i(LR(0)) \leq r$ and $V_i(RL(0)) \geq r$. Then, there exists a point $x\in [0,1]$ such that $i$ values $LR(x)$ exactly at $r$, i.e. $V_i(LR(x)) = r$. In particular, a switching point for $i$ always exists \cite{20Hosseini2020FairDO}.   
\end{lemma}
\begin{theorem}\label{thm: pro3}
A proportional complete multi-allocation that is feasible and contiguous exists for three layers and three agents.  
\end{theorem}
\begin{proof}
Assume that $\mathfrak{C}=(L_l)_{l\in \mathcal{M}}$ is a three-layered cake, and that $V_1$,$V_2$ and $V_3$ are the valuation functions for agents 1, 2, and 3 in that order. Due to $V_1(L_1)+V_1(L_2)+V_1(L_3)=1$, without loss of generality, we assume that agent 1 has a valuation of at least $\nicefrac{1}{3}$ over one of the first two layers and at most $\nicefrac{1}{3}$ over the other of the first two layers. So there exists a point y such that $V_1(LR(L_1\cup L_2,y))=\nicefrac{1}{3}$ by Lemma \ref{lem: switch}.

A new cake, $\mathfrak{C}^\prime$, is now defined as $(L_1^\prime,L_2^\prime)$, the status of $L_1^\prime=(L_2\cap [0,y])\bigcup (L_1\cap [y,1])$ and $L_2^\prime=L_3$. As of right now, agent $1$ has valuation $\nicefrac{2}{3}$ over new cake $\mathfrak{C}^\prime$. Due to Lemma $\ref{lem: switch}$, we find a point $z$ such that $V_1(LR(z,\mathfrak{C}^\prime))=V_1(RL(z,\mathfrak{C}^\prime))=\nicefrac{1}{3}$ where either $y\leq z$ or $z \leq y$. We design a multi-allocation \( \mathcal{P} \)= (\( \mathcal{P}_1,\mathcal{P}_2,\mathcal{P}_3 \)), where the layered pieces $\mathcal{P}_1$, $\mathcal{P}_2$, and $\mathcal{P}_3$ are made from, respectively, $LR(L_1\cup L_2,y)$, $LR(\mathfrak{C}^\prime,z)$, and $RL(\mathfrak{C}^\prime,z)$. As a result, \(V_1(\mathcal{P}_1)=V_1(\mathcal{P}_2)=V_1(\mathcal{P}_3)=\nicefrac{1}{3}\).
The constructed multi-allocation $\mathcal{P}$ must differ depending on where $y$ and $z$ are. We obtain the required multi-allocation \(\mathcal{P}^\ast \) by using this multi-allocation \(\mathcal{P}\).\newline
\textbf{Case 1:} When $z\leq y$, \(\mathcal{P}_1=(L_1\cap [0,y], L_2 \cap [y,1],\emptyset )\), \(\mathcal{P}_2=(\emptyset, L_2\cap [0,z],L_3\cap [z,1])\) and \(\mathcal{P}_3=(L_1\cap [y,1],L_2\cap [z,y],L_3\cap[0,z])\) are the corresponding layered pieces. If two distinct layered pieces \(\mathcal{P}_p \) and \(\mathcal{P}_q \) are obtained, such that
\(V_2(\mathcal{P}_p)\geq \nicefrac{1}{3}\) and \(V_3(\mathcal{P}_q )\geq \nicefrac{1}{3}\), then allocate 
\(\mathcal{P}_p \) to agent $2$, \(\mathcal{P}_q \) to agent $3$, and the remaining layered piece to agent $1$, where $p\neq q \in \{1,2,3\}$. Thus, we obtain a proportional multi-allocation $\mathcal{P}^\ast=\mathcal{P}$ that is feasible and contiguous.\newline
Otherwise, there is only one unique layered piece \( \mathcal{P}_r\) with a value larger than 1/3 for agents 2 and 3, where $r\in \{1,2,3\}$, if there are no two distinct layered pieces \(\mathcal{P}_p \) and \(\mathcal{P}_q \) are obtained in which \(V_2(\mathcal{P}_p)\geq \nicefrac{1}{3}\) and \(V_3(\mathcal{P}_q )\geq \nicefrac{1}{3}\).

 \begin{center} 
\begin{tikzpicture}
  \draw [thick,-] (-2,0)--(4,0);
 \draw[-] (-2,.5)--(4,.5);
 \draw[-] (-2,1)--(4,1);
 \draw[-] (-2,1.5)--(4,1.5);
  \draw[-] (-2,0)--(-2,1.5);
  \draw[-] (4,0)--(4,1.5);
  \draw[-] (2.5,.5)--(2.5,1.5);
  \draw[-] (.5,0)--(.5,1);
  \node at (2.5,1.7) {$y$};
   \node at (-2,1.7) {$0$};
   \node at (.5,1.7) {$z$};
   \node at (4,1.7) {$1$};
 \node at (-2.4,.25) {$L_3$};
 \node at (-2.4,.75) {$L_2$};
\node at (-2.4,1.25) {$L_1$};
  \node at (1,-0.3)   {$\mathfrak{C}=(L_1,L_2,L_3)$};   
  \node at (.25,1.25) {$P_{11}$};
  \node at(3.25,.75) {$P_{12}$};
  \node at(3.25,1.25) {$P_{31}$};
  \node at (-.75,.25){$P_{33}$};
  \node at (-.75,.75) {$P_{22}$};
  \node at (2.25,.25) {$P_{23}$};
  \node at (1.5,.75) {$P_{32}$};
  \node at (1,-.7) { Figure 5: Multi-allocation \( \mathcal{P}\) when $z\leq y$.};
  \end{tikzpicture}
\end{center}

\textbf{Subcase I:} In the case when $r=1$, we allocate the layered piece \( \mathcal{P}_2\) to agent 1 and construct a new cake $\mathfrak{C}^\ast=(L_1^\ast,L_2^\ast)$ where $L_1^\ast=L_1$ and $L_2^\ast=(L_3\cap[0,z])\bigcup (L_2\cap [z,1])$. The value of agents $2$ and $3$ on the new cake $\mathfrak{C}^\ast$ is now at least $\nicefrac{2}{3}$.
We now show that by applying the cut-and-choose procedure between agents $2$ and $3$ on the new cake $\mathfrak{C}$, we can achieve the required multi-allocation $\mathcal{P}^\ast$ for three agents. Due to Lemma $\ref{lem: switch}$, we obtain a point $x$ such that $V_2(LR(x,\mathfrak{C}^\ast))=V_2(RL(x,\mathfrak{C}^\ast))=\frac{V_2(\mathfrak{C}^\ast)}{2}> \nicefrac{1}{3}$. Out of the diagonal pieces, Agent $3$ selects a layered piece that he or she only weakly prefers. Without loss of generality, we assume that agent $3$ prefers $RL(x,\mathfrak{C}^\ast)$, and agent $2$ receives the remaining diagonal piece $LR(x,\mathfrak{C}^\ast)$.
According to the positions of $x$ and $z$, the required multi-allocation $\mathcal{P}^\ast$ should be different.
\newline 
When $x\leq z$, the required multi-allocation \(\mathcal{P}^\ast=(\mathcal{P}_1^\ast,\mathcal{P}_2^\ast,\mathcal{P}_3^\ast)\)  where \(\mathcal{P}_1^\ast=(\emptyset,L_2\cap [0,z],L_3\cap [z,1])\), \(\mathcal{P}_2^\ast=(L_1\cap [0,x],L_2\cap [z,1],L_3\cap [x,z])\) and \(\mathcal{P}_3^\ast=(L_1\cap [x,1],\emptyset,L_3\cap [0,x])\).
\newline
When $z\leq x $, the required multi-allocation \(\mathcal{P}^\ast=(\mathcal{P}_1^\ast,\mathcal{P}_2^\ast,\mathcal{P}_3^\ast)\) where \(\mathcal{P}_1^\ast=(\emptyset,L_2\cap [0,z],L_3\cap [z,1])\), \(\mathcal{P}_2^\ast=(L_1\cap [0,x],L_2\cap [x,1],\emptyset)\), and \(\mathcal{P}_3^\ast=(L_1\cap [x,1],L_2\cap [z,x],L_3\cap [0,z])\).
\begin{center}

     \begin{tikzpicture}
  \draw [thick,-] (-2,0)--(4,0);
 \draw[-] (-2,.5)--(4,.5);
 \draw[-] (-2,1)--(4,1);
 \draw[-] (-2,1.5)--(4,1.5);
  \draw[-] (-2,0)--(-2,1.5);
  \draw[-] (4,0)--(4,1.5);
  \draw[-] (-.5,1)--(-.5,1.5);
  \draw[-] (-.5,0)--(-.5,.5);
  \draw[-] (.5,0)--(.5,1);
  \node at (-.5,1.7) {$x$};
   \node at (-2,1.7) {$0$};
   \node at (.5,1.7) {$z$};
   \node at (4,1.7) {$1$};
 \node at (-2.4,.25) {$L_3$};
 \node at (-2.4,.75) {$L_2$};
\node at (-2.4,1.25) {$L_1$};
  \node at (1,-0.3)   {$\mathfrak{C}=(L_1,L_2,L_3)$};  \node at (-.75,.75) {$P^\ast_{12}$};
  \node at (2.25,.25) {$P^\ast_{13}$};
  \node at (-1.25,1.25) {$P^\ast_{21}$};
  \node at (2.25,.75)  {$P^\ast_{22}$};
  \node at (0,.25) {$P^\ast_{23}$};
  \node at (1.75,1.25) {$P^\ast_{31}$};
  \node at (-1.25,0.25) {$P^\ast_{33}$};
  \node at (1,-.7){Figure 6: 
    Multi-allocation \(\mathcal{P}^\ast\)
             when $x\leq z$.};
\end{tikzpicture}
    \begin{tikzpicture}
  \draw [thick,-] (-2,0)--(4,0);
 \draw[-] (-2,.5)--(4,.5);
 \draw[-] (-2,1)--(4,1);
 \draw[-] (-2,1.5)--(4,1.5);
  \draw[-] (-2,0)--(-2,1.5);
  \draw[-] (4,0)--(4,1.5);
  \draw[-] (2,.5)--(2,1.5);
  \draw[-] (.5,0)--(.5,1);
  \node at (2,1.7) {$x$};
   \node at (-2,1.7) {$0$};
   \node at (.5,1.7) {$z$};
   \node at (4,1.7) {$1$};
 \node at (-2.4,.25) {$L_3$};
 \node at (-2.4,.75) {$L_2$};
\node at (-2.4,1.25) {$L_1$};
  \node at (1,-0.3)   {$\mathfrak{C}=(L_1,L_2,L_3)$};  
   \node at (-.75,.75) {$P^\ast_{12}$};
  \node at (2.25,.25) {$P^\ast_{13}$};
  \node at (0,1.25) {$P^\ast_{21}$};
  \node at (3,.75) {$P^\ast_{22}$};
  \node at (3,1.25) {$P^\ast_{31}$};
  \node at (1.25,.75) {$P^\ast_{32}$};
  \node at (-1.25,0.25) {$P^\ast_{33}$};
  \node at (1,-.7){Figure 7: Multi-allocation \(\mathcal{P}^\ast\) when $z\leq x$.};
  
\end{tikzpicture}

\end{center}

\textbf{Subcase II:} In the scenario when $r=2$ or $3$, we assign the layered piece \(\mathcal{P}_1\) to agent 1, and we define a new cake $\mathfrak{C}^\ast=(L_1^\ast,L_2^\ast)$ where $L_1^\ast=L_2\cap [0,y]\bigcup L_1\cap [y,1]$ and
$L_2^\ast=L_3$. Now agents $2$ and $3$ have a value over the new cake $\mathfrak{C}^\ast$ of at least two-thirds. Similar to Subcase I, we obtain the required multi-allocation $\mathcal{P}^\ast$ that depends on the positions of $x$ and $y$, where $x$ is a switching point of agent $2$. \newline
When $x\leq y$, the required multi-allocation \(\mathcal{P}^\ast=(\mathcal{P}_1^\ast,\mathcal{P}_2^\ast,\mathcal{P}_3^\ast)\) where \(\mathcal{P}_1^\ast=\mathcal{P}_1\), \(\mathcal{P}_2^\ast =(\emptyset,L_2\cap [0,x],L_3\cap [x,1])\) and \((\mathcal{P}_3^\ast=(L_1\cap [y,1],L_2\cap [x,y],L_3\cap [0,x])\).
\newline
When $y\leq x$, the required multi-allocation \(\mathcal{P}\ast=(\mathcal{P}_1^\ast,\mathcal{P}_2^\ast,\mathcal{P}_3^\ast)\) where \(\mathcal{P}_1^\ast=\mathcal{P}_1\), \(\mathcal{P}_2^\ast=(L_1\cap [y,x],L_2\cap [0,y],L_3\cap [x,1])\) and \(\mathcal{P}_3^\ast=(L_1\cap [x,1],\emptyset,L_3\cap [0,x])\).

\begin{center}
     \begin{tikzpicture}
  \draw [thick,-] (-2,0)--(4,0);
 \draw[-] (-2,.5)--(4,.5);
 \draw[-] (-2,1)--(4,1);
 \draw[-] (-2,1.5)--(4,1.5);
  \draw[-] (-2,0)--(-2,1.5);
  \draw[-] (4,0)--(4,1.5);
  \draw[-] (2.5,.5)--(2.5,1.5);
  \draw[-] (0,0)--(0,1);
  \node at (2.5,1.7) {$y$};
   \node at (-2,1.7) {$0$};
   \node at (0,1.7) {$x$};
   \node at (4,1.7) {$1$};
 \node at (-2.4,.25) {$L_3$};
 \node at (-2.4,.75) {$L_2$};
\node at (-2.4,1.25) {$L_1$};
  \node at (1,-0.3)   {$\mathfrak{C}=(L_1,L_2,L_3)$}; 
  \node at (.25,1.25) {$P^\ast_{11}$};
  \node at(3.25,.75) {$P^\ast_{12}$};
  \node at (-1,.75){$P^\ast_{22}$};
  \node at (2,.25) {$P^\ast_{23}$};
  \node at (3.25,1.25) {$P^\ast_{31}$};
  \node at (1.25,.75) {$P^\ast_{32}$};
  \node at (-1,0.25) {$P^\ast_{33}$};
  \node at (1,-.7){ Figure 8: Multi-allocation \(\mathcal{P}^\ast\) when $x\leq y$.};
\end{tikzpicture}
\begin{tikzpicture}
  \draw [thick,-] (-2,0)--(4,0);
 \draw[-] (-2,.5)--(4,.5);
 \draw[-] (-2,1)--(4,1);
 \draw[-] (-2,1.5)--(4,1.5);
  \draw[-] (-2,0)--(-2,1.5);
  \draw[-] (4,0)--(4,1.5);
  \draw[-] (2.5,.5)--(2.5,1.5);
  \draw[-] (3.2,0)--(3.2,.5);
  \draw[-] (3.2,1)--(3.2,1.5);
  \node at (2.5,1.7) {$y$};
   \node at (-2,1.7) {$0$};
   \node at (3.2,1.7) {$x$};
   \node at (4,1.7) {$1$};
 \node at (-2.4,.25) {$L_3$};
 \node at (-2.4,.75) {$L_2$};
\node at (-2.4,1.25) {$L_1$};
  \node at (1,-0.3)   {$\mathfrak{C}=(L_1,L_2,L_3)$}; 
  \node at (.25,1.25) {$P^\ast_{11}$};
  \node at (3.25,.75) {$P^\ast_{12}$};
  \node at (2.85,1.25) {$P^\ast_{21}$};
  \node at (.25,.75) {$P^\ast_{22}$};
  \node at (3.6,.25) {$P^\ast_{23}$};
  \node at (3.6,1.25) {$P^\ast_{31}$};
  \node at (1.6,.25) {$P^\ast_{33}$};
  \node at (1,-.7){ Figure 9: Multi-allocation \(\mathcal{P}^\ast\) when $y\leq x$.};
\end{tikzpicture}
\end{center}

\textbf{Case 2:} When $y\leq z$, \(\mathcal{P}_1=(L_1\cap [0,y],L_2\cap [y,1],\emptyset)\), \(\mathcal{P}_2=(L_1\cap [y,z],L_2\cap [0,y],L_3\cap [z,1])\) and \(\mathcal{P}_3=(L_1\cap [z,1], \emptyset,L_3\cap [0,z])\) are the equivalent layered pieces. In the case in which two distinct layered pieces \(\mathcal{P}_p\) and \((\mathcal{P}_q\) are obtained, such that \(V_2(\mathcal{P}_p)\geq \frac{1}{3}\) and \(V_3(\mathcal{P}_q)\geq \frac{1}{3}\), assign \(\mathcal{P}_p\) to agent $2$, \(\mathcal{P}_q\) to agent $3$, and the last layered piece to agent $1$, where $p\neq q \in \{1,2,3\}$. Thus, we obtain a feasible and contiguous proportional multi-allocation, $\mathcal{P}^\ast=\mathcal{P}$. Otherwise, for agents $2$ and $3$, there is only one unique layered piece \(\mathcal{P}_r\) with a value greater than $\nicefrac{1}{3}$, where $r\in \{1,2,3\}$.
\begin{center}
     \begin{tikzpicture}
  \draw [thick,-] (-2,0)--(4,0);
 \draw[-] (-2,.5)--(4,.5);
 \draw[-] (-2,1)--(4,1);
 \draw[-] (-2,1.5)--(4,1.5);
  \draw[-] (-2,0)--(-2,1.5);
  \draw[-] (4,0)--(4,1.5);
  \draw[-] (2.5,.5)--(2.5,1.5);
  \draw[-] (3,0)--(3,.5);
  \draw[-] (3,1)--(3,1.5);
  \node at (2.5,1.7) {$y$};
   \node at (-2,1.7) {$0$};
   \node at (3,1.7) {$z$};
   \node at (4,1.7) {$1$};
 \node at (-2.4,.25) {$L_3$};
 \node at (-2.4,.75) {$L_2$};
\node at (-2.4,1.25) {$L_1$};
  \node at (1,-0.3)   {$\mathfrak{C}=(L_1,L_2,L_3)$}; 
  \node at (.25,1.25) {$P_{11}$};
  \node at (3.25,.75) {$P_{12}$};
  \node at (2.75,1.25) {$P_{21}$};
  \node at (.25,.75) {$P_{22}$};
  \node at (3.5,.25) {$P_{23}$};
  \node at (3.5,1.25) {$P_{31}$};
  \node at (.5,.25) {$P_{33}$};
  \node at (1,-.7){Figure 10: Multi-allocation $\mathcal{P}$ when $y\leq z$.};
\end{tikzpicture}
\end{center}

\textbf{Subcase I:} We define a new cake $\mathfrak{C}^\ast=(L_1^\ast,L_2^\ast)$ and assign the layered piece \(\mathcal{P}_3\) to agent $1$ when $r=1$, where $L_1^\ast=(L_1\cap [0,z])\bigcup (L_3 \cap [z,1])$ and $L_2^\ast=L_2$. Another two agents, $2$ and $3$, have valuations at least $\nicefrac{2}{3}$ over the new cake $\mathfrak{C}^\ast$. We now show that we can obtain the required multi-allocation $\mathcal{P}^\ast$ for three agents by using the cut-and-choose procedure among agents $2$ and $3$ over the new cake $\mathcal{C}^\ast$. In accordance with Lemma $\ref{lem: switch}$, we must get a point $x$ that satisfies the condition $V_2(LR(x,\mathfrak{C}^\ast))=V_2(RL(x,\mathfrak{C}^\ast))=\frac{V_2(\mathfrak{C}^\ast)}{2}>\frac{1}{3}$. Without loss of generality, we assume that agent $3$ chooses $RL(x,\mathfrak{C}^\ast)$ and agent $2$ obtains $LR(x,\mathfrak{C}^\ast)$. The required multi-allocation $\mathcal{P}^\ast$ must differ depending on where $x$ and $z$ are.\newline 
When $x\leq z,$ the required multi-allocation  \(\mathcal{P}^\ast=(\mathcal{P}_1^\ast,\mathcal{P}_2^\ast,\mathcal{P}_3^\ast)\) where \(\mathcal{P}_1^\ast=(L_1\cap [z,1],\emptyset,L_3\cap [o,z])\), \(\mathcal{P}_2^\ast=(L_1\cap [0,x],L_2\cap [x,1],\emptyset)\) and \(\mathcal{P}_3^\ast=(L_1\cap [x,z],L_2\cap [o,x],L_3\cap[z,1])\).\newline
When $z\leq x$, the layered pieces of required multi-allocation $\mathcal{P}^\ast$ are \(\mathcal{P}_1^\ast=(L_1\cap [z,1],\emptyset,L_3\cap [0,z])\), \(\mathcal{P}_2^\ast=(L_1\cap [0,z],L_2\cap [x,1],L_3\cap [z,x])\) and \(\mathcal{P}_3^\ast=(\emptyset,L_2\cap [0,x],L_3\cap [x,1])\).

\begin{center}
     \begin{tikzpicture}
  \draw [thick,-] (-2,0)--(4,0);
 \draw[-] (-2,.5)--(4,.5);
 \draw[-] (-2,1)--(4,1);
 \draw[-] (-2,1.5)--(4,1.5);
  \draw[-] (-2,0)--(-2,1.5);
  \draw[-] (4,0)--(4,1.5);
  \draw[-] (1.5,.5)--(1.5,1.5);
  \draw[-] (3,0)--(3,.5);
  \draw[-] (3,1)--(3,1.5);
  \node at (1.5,1.7) {$x$};
   \node at (-2,1.7) {$0$};
   \node at (3,1.7) {$z$};
   \node at (4,1.7) {$1$};
 \node at (-2.4,.25) {$L_3$};
 \node at (-2.4,.75) {$L_2$};
\node at (-2.4,1.25) {$L_1$};
  \node at (1,-0.3)   {$\mathfrak{C}=(L_1,L_2,L_3)$};
  \node at (3.5,1.25) {$P^\ast_{11}$};
  \node at (.5,.25) {$P^\ast_{13}$};
  \node at (-.25,1.25){$P^\ast_{21}$};
  \node at (2.75,.75) {$P^\ast_{22}$};
  \node at (-.25,.75) {$P^\ast_{32}$};
  \node at (3.5,.25) {$P^\ast_{33}$};
  \node at (2.25,1.25) {$P^\ast_{31}$};
  \node at (1,-.7){Figure 11: Multi-allocation \(\mathcal{P}^\ast\)when $x\leq z$.};
\end{tikzpicture}
\begin{tikzpicture}
  \draw [thick,-] (-2,0)--(4,0);
 \draw[-] (-2,.5)--(4,.5);
 \draw[-] (-2,1)--(4,1);
 \draw[-] (-2,1.5)--(4,1.5);
  \draw[-] (-2,0)--(-2,1.5);
  \draw[-] (4,0)--(4,1.5);
  \draw[-] (3.5,0)--(3.5,1);
  \draw[-] (3,0)--(3,.5);
  \draw[-] (3,1)--(3,1.5);
  \node at (3.5,1.7) {$x$};
   \node at (-2,1.7) {$0$};
   \node at (3,1.7) {$z$};
   \node at (4,1.7) {$1$};
 \node at (-2.4,.25) {$L_3$};
 \node at (-2.4,.75) {$L_2$};
\node at (-2.4,1.25) {$L_1$};
  \node at (1,-0.3)   {$\mathfrak{C}=(L_1,L_2,L_3)$};   
  \node at (3.5,1.25) {$P^\ast_{11}$};
  \node at (.5,.25) {$P^\ast_{13}$};
  \node at (.75,1.25){$P^\ast_{21}$};
  \node at (3.25,.25){$P^\ast_{23}$};
  \node at (3.75,.75) {$P^\ast_{22}$};
  \node at (3.75,.25){$P^\ast_{33}$};
  \node at (1.75,.75){$P^\ast_{32}$};
  \node at (1,-.7){Figure 12: Multi-allocation \(\mathcal{P}^\ast\) when $z\leq x$.};

\end{tikzpicture}

\end{center}

\textbf{Subcase II:} When $r=2$ or $3$, we allocate the layered piece \(\mathcal{P}_1\) to agent $1$ and define a new cake $\mathfrak{C}^\ast=(L_1^\ast,L_2^\ast)$ where $L_1^\ast=(L_1\cap [0,y] \bigcup L_2\cap [y,1])$ and $L_2^\ast=L_3$. The value of agents $2$ and $3$ on the new cake $\mathfrak{C}^\ast$ is now at least $\frac{2}{3}$. Similar to Subcase I, We get the required multi-allocation $\mathcal{P}^\ast$, which depends on the locations of $X$ and $y$, where $x$ is a switching point of agent $2$.\newline
When $x\leq y$, the required multi-allocation \(\mathcal{P}^\ast=(\mathcal{P}_1^\ast,\mathcal{P}_2^\ast,\mathcal{P}_3^\ast)\) where \(\mathcal{P}_1^\ast=(L_1\cap [0,y],L_2\cap [y,1],\emptyset)\), \(\mathcal{P}_2^\ast=(\emptyset,L_2\cap [0,x],L_3\cap [x,1])\) and \(\mathcal{P}_3^\ast=(L_1\cap [y,1],L_2\cap [x,y],L_3 \cap[0,x])\). \newline
When $y\leq x,$ the layered pieces of the required multi-layered cake are \(\mathcal{P}_1^\ast=(L_1\cap [0,y],L_2\cap [y,1],\emptyset)\), \(\mathcal{P}_2^\ast=(L_1\cap [y,x],L_2\cap [0,y],L_3\cap [x,1])\) and \(\mathcal{P}_3^\ast=(L_1\cap [x,1],\emptyset,L_3\cap [0,x])\).
\begin{center}
   \begin{tikzpicture}
  \draw [thick,-] (-2,0)--(4,0);
 \draw[-] (-2,.5)--(4,.5);
 \draw[-] (-2,1)--(4,1);
 \draw[-] (-2,1.5)--(4,1.5);
  \draw[-] (-2,0)--(-2,1.5);
  \draw[-] (4,0)--(4,1.5);
  \draw[-] (2.5,.5)--(2.5,1.5);
  \draw[-] (1,0)--(1,1);
  \node at (2.5,1.7) {$y$};
   \node at (-2,1.7) {$0$};
   \node at (1,1.7) {$x$};
   \node at (4,1.7) {$1$};
 \node at (-2.4,.25) {$L_3$};
 \node at (-2.4,.75) {$L_2$};
\node at (-2.4,1.25) {$L_1$};
  \node at (1,-0.3)   {$\mathfrak{C}=(L_1,L_2,L_3)$}; 
  \node at (.25,1.25){$P^\ast_{11}$};
  \node at (3.25,.75){$P^\ast_{12}$};
  \node at (-.5,.75){$P^\ast_{22}$};
  \node at (2.5,.25){$P^\ast_{23}$};
  \node at (3.25,1.25){$P^\ast_{31}$};
  \node at (1.75,.75){$P^\ast_{32}$};
  \node at (-.5,.25) {$P^\ast_{33}$};
  \node at (1,-.7){Figure 13: Multi-allocation \(\mathcal{P}^\ast\) when $x\leq y$.};
\end{tikzpicture}
\begin{tikzpicture}
  \draw [thick,-] (-2,0)--(4,0);
 \draw[-] (-2,.5)--(4,.5);
 \draw[-] (-2,1)--(4,1);
 \draw[-] (-2,1.5)--(4,1.5);
  \draw[-] (-2,0)--(-2,1.5);
  \draw[-] (4,0)--(4,1.5);
  \draw[-] (2.5,.5)--(2.5,1.5);
  \draw[-] (3.2,0)--(3.2,.5);
  \draw[-] (3.2,1)--(3.2,1.5);
  \node at (2.5,1.7) {$y$};
   \node at (-2,1.7) {$0$};
   \node at (3.2,1.7) {$x$};
   \node at (4,1.7) {$1$};
 \node at (-2.4,.25) {$L_3$};
 \node at (-2.4,.75) {$L_2$};
\node at (-2.4,1.25) {$L_1$};
  \node at (1,-0.3)   {$\mathfrak{C}=(L_1,L_2,L_3)$}; 
  \node at (.25,1.25){$P^\ast_{11}$};
  \node at (3.25,.75){$P^\ast_{12}$};
  \node at (.25,.75){$P^\ast_{22}$};
  \node at (2.85,1.25){$P^\ast_{21}$};
  \node at (3.6,.25){$P^\ast_{23}$};
  \node at (3.6,1.25){$P^\ast_{31}$};
  \node at (.6,.25){$P^\ast_{33}$};
  \node at (1,-.7){Figure 14: Multi-allocation \(\mathcal{P}^\ast\) when $y\leq x$.};
\end{tikzpicture}
\end{center}
In either scenario, we obtain a proportional multi-allocation \(\mathcal{P}^\ast\) that satisfies the feasibility and contiguity criteria.
\end{proof}
Now we are ready to give computational procedure to find a proportional multi-allocation that satisfies feasibility and contiguity conditions when $n\geq 3$.
In these computational procedure, we recall the Theorem $\ref{2}$. In Theorem $\ref{2}$, Hosseini et al. \cite{20Hosseini2020FairDO} give the computational procedure to find a feasible and contiguous multi-allocation when $n\geq m$, where $m=2^a$ for $a\in \mathbb{Z}_+$.
\begin{theorem}\label{2}
A proportional complete multi-allocation that is feasible and contiguous exists, for any number $m$ of layers and any number $n\geq m$ of agents, where $m=2^a$ for some $a\in \mathbb{Z}_+$.
\end{theorem}
In Theorem \ref{thm: prograt3}, we show that feasible and contiguous proportional allocation can be computed efficiently when $n\geq m$ and $m=3$.
\begin{theorem}\label{thm: prograt3}
A proportional complete multi-allocation that is feasible and contiguous exists for three layers and any number $n\geq 3$ agents.    
\end{theorem}
\begin{proof}
    We show the computation of the required multi-allocation in two scenarios where $n$ is even or odd and $n\geq 4$. Theorem 3 implies the proof to take into account $n=3$.\newline
\textbf{Case 1:} Suppose that $n$ is even  and is of the form $n=2k$, where $k\in \mathbb{Z}_+\backslash \{1\}$. Because there are an odd number of layers, we are unable to directly apply the majority switching point property. After treating the first and second layers of the cake $\mathfrak{C}$, aka $L_1$ and $L_2$, as a single layer of a new cake $\mathfrak{C}^\prime=(L_1^\prime,L_2^\prime)$, we apply this property, where 
 $L_1^\prime=L_1\cup L_2$, $L_2^\prime=L_3$ and $\mathfrak{C}=(L_1,L_2,L_3)$. The non-negative integrable density function of an agent $i$ over the layer $L^\prime_1$ is $v_{i1} + v_{i2}$, where $v_{il}$ is the density function of the agent $i$ over the layer $L_l$ for $l=1,2$. $L^\prime_1 =L_1\cup L_2$ denotes that the layer $L_1$ and layer $L_2$ are mutually overlapping.

We obtain a majority switching point $x\in [0,1]$ due to the fact that the cake $\mathfrak{C}^\prime$ has an even number of layers. We set $\mathcal{I}_1$ to be $LR(x,\mathfrak{C}^\prime)$ and $\mathcal{I}_2$ to be $RL(x,\mathfrak{C}^\prime)$. Due to the fact that $n$ is even and the notion of a majority switching point, we may divide the set of agents
 \(\mathcal{N}\) into \(\mathcal{N}_1\) and \(\mathcal{N}_2\), such that \(|\mathcal{N}_1|={|\mathcal{N}_2|}=\frac{|\mathcal{N}|}{2}\), where \(\mathcal{N}_1\) is the set of agents who weakly prefer $\mathcal{I}_1$ to $\mathcal{I}_2$ and \(\mathcal{N}_2\)  is the set of agents who weakly prefer $\mathcal{I}_2$ to $\mathcal{I}_1$.

We have $\mathcal{I}_1=(L_1\cup L_2)\cap [0,x] \bigcup L_3\cap[x,1]$ and $\mathcal{I}_2= L_3\cap [0,x] \bigcup (L_1\cup L_2)\cap [x,1]$. 
We now take into account two other 2-layered cakes, $\mathfrak{C}^1$ and $\mathfrak{C}^{2}$, which are obtained by merging $\mathcal{I}_1$ and $\mathcal{I}_2$, respectively, where $\mathfrak{C}^1=(L_1\cap[0,x],L_2\cap[0,x]\bigcup L_3\cap[x,1])$ and $\mathfrak{C}^{2}=(L_1\cap[x,1],L_3\cap[0,x]\bigcup L_2\cap[x,1])$. As a result, each agent $i\in\mathcal{N}_j$ on the cake $\mathfrak{C}^j$, where $j = 1$ and $2$, has a value of at least $V_i(I_j)=\frac{V_i(LR(x,\mathfrak{C}))}{2}$. Thus, we get a proportional multi-allocation feasible and contiguous for the set of agents $\mathcal{N}_j$ on the cake $\mathcal{C}^j$, where $j=1$ and $2$, since $|\mathcal{N}_j|=k\geq 2$ and Theorem $\ref{2}$ imply that. As a result, merging the two multi-allocations yields a contiguous, feasible, and complete multi-allocation that guarantees that each agent receives a proportional share.
\begin{center}
     
 \begin{tikzpicture}
  \draw [thick,-] (-2,0)--(4,0);
 \draw[-] (-2,.4)--(4,.4);
 \draw[dashed] (-2,.8)--(4,.8);
 \draw[-] (-2,1.2)--(4,1.2);
  \draw[-] (-2,0)--(-2,1.2);
  \draw[-] (4,0)--(4,1.2);
  \draw[-] (.5,0)--(.5,1.2);
   \node at (.5,1.4) {$x$};
   \node at (-2,1.4) {$0$};
   \node at (4,1.4) {$1$};
   \node at (-1,.8) {$RL(x,\mathfrak{C})$};
   \node at (2,.8) {$LR(x,\mathfrak{C})$};
    \node at (-1,.2) {$LR(x,\mathfrak{C})$};
     \node at (2,.2) {$RL(x,\mathfrak{C})$};
     \node at (-2.4,.2) {$L_3$};
     \node at (-2.4,.6) {$L_2$};
     \node at (-2.4,1) {$L_1$};
     \node at (1,-0.3)   {$\mathfrak{C}=(L_1,L_2,L_3)$};
\end{tikzpicture}

\begin{tikzpicture}
\draw[-](0,0)--(1.25,0);
\draw[-](0,-.4)--(3,-.4);
\draw[-](0,-.8)--(3,-.8);
\draw[-](0,0)--(0,-.8);
\draw[-](3,-.4)--(3,-.8);
\draw[-](1.25,0)--(1.25,-.4);
\draw[dashed](1.25,-.4)--(1.25,-.8);
\node at (0,.3) {$0$};
\node at (1.25,.3){$x$};
\node at (3,.3) {$1$};
\node at (1.5,-1) {$\mathfrak{C}^\prime=(L_1^\prime,L_2^\prime)$};

 \draw[-] (5.25,0)--(7,0);
 \draw[-](4,-.4)--(7,-.4);
 \draw[-] (4,-.8)--(7,-.8);
 \draw[-](7,0)--(7,-.8);
 \draw[-](4,-.4)--(4,-.8);
 \draw[-](5.25,0)--(5.25,-.4);
 \draw[dashed](5.25,-.4)--(5.25,-.8);
  \node at (4,0.3) {$0$};
  \node at (5.25,0.3) {$x$};
  \node at (7,0.3) {$1$};
  \node at (5.5,-1) {$\mathfrak{C}^{\prime\prime}=(L_1^{\prime\prime},L_2^{\prime\prime})$};
  \node at (3.5,-1.3) {Figure 15};
 \end{tikzpicture}

\end{center}
\textbf{Case 2:} 
In the case when $n$ is odd, it can be expressed as $n=2k+1$, where $k\in \mathbb{Z}_+\backslash \{1\}$. In this case, our aim is to reduce this case to case 1. Without loss of generality, we assume that there are some agents such that each has a value is at least $\frac{1}{n}$ on the top layer cake $L_1$.  Then, we ask each agent $a_i$ to place a mark at point $y_i$ on cake layer $L_1$ so that the value of the piece $Y=L_1\cap [0,y_i]$ is equal to $\frac{1}{n}$ and allocate the piece $Y=L_1\cap [0,y]$ to agent $p$ where $y=y_p=$min$\{y_1,y_2,y_3,.....,y_n\}$. In order to decide how to share the cake among $n$ agents, we reduce it to an instance 
 ($\mathcal{N}\backslash\{p\}$, $(L_l^\prime)_{l\in \mathcal{M}}$, $(V_i)_{i\in \mathcal{N}^\prime\backslash\{p\}}$) where $L_1^{\prime}=L_1\backslash Y$ and $L_l^{\prime}=L_l$ for $l\neq 1$. Each agent $i\in \mathcal{N}$ has a value of at least $\frac{n-1}{n}V_i(\mathfrak{C})=\frac{2k}{2k+1}V_i(\mathfrak{C})$ on the remaining cake. Due to Case $1$, we obtain a proportional multi-allocation 
 \(\mathcal{P}\) that is contiguous and feasible, for instance ($\mathcal{N}\backslash\{p\}$, $(L_l^\prime)_{l\in \mathcal{M}}$, $(V_i)_{i\in \mathcal{N}\backslash\{p\}}$). Together with the allocated piece $L_1\cap [0,y]$ and the multi-allocation \(\mathcal{P}\), we get a proportional multi-allocation \(\mathcal{P}^\ast\) that is feasible and contiguous. Pictures are shown in figure 16.
 \begin{center}
      \begin{tikzpicture}
  \draw [thick,-] (-2,0)--(4,0);
 \draw[-] (-2,.4)--(4,.4);
 \draw[dashed] (-2,.8)--(4,.8);
 \draw[-] (-2,1.2)--(4,1.2);
  \draw[-] (-2,0)--(-2,1.2);
  \draw[-] (4,0)--(4,1.2);
  \draw[-] (.5,0)--(.5,1.2);
  \draw[-] (-1.7,.8)--(-1.7,1.2);
  \node at (-1.7,1.4){$y$};
  \node at (.5,1.4) {$x$};
   \node at (-2,1.4) {$0$};
   \node at (4,1.4) {$1$};
   \node at (-1,.8) {$RL(x,\mathfrak{C})$};
   \node at (2,.8) {$LR(x,\mathfrak{C})$};
    \node at (-1,.2) {$LR(x,\mathfrak{C})$};
     \node at (2,.2) {$RL(x,\mathfrak{C})$};
     \node at (-2.4,.2) {$L_3$};
     \node at (-2.4,.6) {$L_2$};
     \node at (-2.4,1) {$L_1$};
     \node at (1,-0.3)   {$\mathfrak{C}=(L_1,L_2,L_3)$};
\end{tikzpicture}

 \begin{tikzpicture}
\draw[-](.18,0)--(1.25,0);
\draw[-](0,-.4)--(3,-.4);
\draw[-](0,-.8)--(3,-.8);
\draw[-](0,-.4)--(0,-.8);
\draw[-](3,-.4)--(3,-.8);
\draw[-](1.25,0)--(1.25,-.4);
\draw[-](0.18,0)--(0.18,-.4);
\draw[dashed](1.25,-.4)--(1.25,-.8);
\node at (.18,.3){$y$};
\node at (0,.3) {$0$};
\node at (1.25,.3){$x$};
\node at (3,.3) {$1$};
\node at (1.5,-1) {$\mathfrak{C}^\prime=(L_1^\prime,L_2^\prime)$};

 \draw[-] (5.25,0)--(7,0);
 \draw[-](4,-.4)--(7,-.4);
 \draw[-] (4,-.8)--(7,-.8);
 \draw[-](7,0)--(7,-.8);
 \draw[-](4,-.4)--(4,-.8);
 \draw[-](5.25,0)--(5.25,-.4);
 \draw[dashed](5.25,-.4)--(5.25,-.8);
  \node at (4,0.3) {$0$};
  \node at (5.25,0.3) {$x$};
  \node at (7,0.3) {$1$};
  \node at (5.5,-1) {$\mathfrak{C}^{\prime\prime}=(L_1^{\prime\prime},L_2^{\prime\prime})$};
  \node at (3.5,-1.3){Figure 16};
 \end{tikzpicture}
 \end{center}
 \end{proof}
We will use the following lemma to computing proportional allocation for any number $n\geq 2^a3$ of agents and $2^a3$ layers, where $a$ is any positive integer.
\begin{lemma}\label{lem: nonoverlapping}
Let $\mathfrak{C}$ be a $2m$-layered cake and $x \in [0, 1]$.
Suppose that $\mathfrak{C}^\prime$  is a $m$-layered cake obtained by merging $LR(x,\mathfrak{C})$ or $RL(x,\mathfrak{C})$. Then, each non-overlapping contiguous layered piece of $\mathfrak{C}^\prime$ is a non-overlapping contiguous layered piece of the original cake $\mathfrak{C}$ \cite{20Hosseini2020FairDO}.  
\end{lemma}

In Theorem \ref{thm: 23}, we show that feasible and contiguous proportional allocation can be computed efficiently when $m=n$, where the number $m$ of layers is of the form $2^a3$ and $a$ is a positive integer.
\begin{theorem}\label{thm: 23}
 If the number of agents and the number of layers are equal and is of the form $2^a3$ where $a\in \mathbb{Z}_+$, then we can compute a proportional multi-allocation that is contiguous and feasible.   
\end{theorem}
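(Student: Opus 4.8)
The plan is to induct on the exponent $a$, using a majority switching point to halve the cake at each stage so that after $a$ steps every surviving subproblem is a prime-power cake, to which the combinatorial-topology theorem of Igarashi and Meunier \cite{Igarashi2021EnvyfreeDO} applies. Since the number of agents equals the number of layers at every stage, the feasibility requirement $n\ge m$ is automatic. For the base case $a=0$ we have $m=n=P^{b}$: if $b=0$ then $m=1$ and the single agent takes the whole cake (trivially proportional, feasible and contiguous), while if $b\ge 1$ then $P^{b}$ is a prime power and the existence of a proportional, feasible and contiguous multi-allocation of a $P^{b}$-layered cake among $P^{b}$ agents is exactly the Igarashi--Meunier result.

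For the inductive step, assume the statement holds for the exponent $a-1$ and every admissible $b$, i.e.\ for a $2^{a-1}P^{b}$-layered cake with $2^{a-1}P^{b}$ agents, and let $m=n=2^{a}P^{b}$ with $a\ge1$. Both $m$ and $n$ are then even. First I would invoke the existence of a majority switching point $x\in[0,1]$ for $\mathfrak{C}$; because $n$ is even, this point lets me split $\Nb$ into two blocks $\Nb_{1},\Nb_{2}$ with $|\Nb_{1}|=|\Nb_{2}|=\tfrac{n}{2}=2^{a-1}P^{b}$, every agent of $\Nb_{1}$ weakly preferring $LR(x)$ to $RL(x)$ and every agent of $\Nb_{2}$ weakly preferring $RL(x)$ to $LR(x)$, exactly as in the even-$n$ case of the $3$-layered argument above. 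As $V_{i}(LR(x))+V_{i}(RL(x))=1$, the one-sided preference forces $V_{i}(LR(x))\ge\tfrac12$ for $i\in\Nb_{1}$ and $V_{i}(RL(x))\ge\tfrac12$ for $i\in\Nb_{2}$.

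Next I would fold each diagonal piece into a cake of half the depth. The piece $LR(x)$ is built from the $[0,x]$-parts of layers $1,\dots,\tfrac{m}{2}$ and the $[x,1]$-parts of layers $\tfrac{m}{2}+1,\dots,m$; stacking layer $l$ on $[0,x]$ together with layer $l+\tfrac{m}{2}$ on $[x,1]$ (which are horizontally disjoint) produces a genuine $\tfrac{m}{2}=2^{a-1}P^{b}$-layered cake $\mathfrak{C}^{\prime}$, and $RL(x)$ likewise produces $\mathfrak{C}^{\prime\prime}$. By the even-layer analogue of the merge lemma proved above, any contiguous, non-overlapping layered piece of $\mathfrak{C}^{\prime}$ or $\mathfrak{C}^{\prime\prime}$ unpacks to a contiguous, non-overlapping layered piece of the original $\mathfrak{C}$. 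Since $\mathfrak{C}^{\prime}$ has $2^{a-1}P^{b}$ layers and is to be shared among the $2^{a-1}P^{b}$ agents of $\Nb_{1}$, the induction hypothesis yields a proportional, feasible, contiguous multi-allocation $\Pb^{1}$ of $\mathfrak{C}^{\prime}$ among $\Nb_{1}$, and symmetrically $\Pb^{2}$ of $\mathfrak{C}^{\prime\prime}$ among $\Nb_{2}$; unpacking and concatenating them gives a multi-allocation $\Pb$ of $\mathfrak{C}$. To confirm proportionality in the original units, note that applying the hypothesis to $i\in\Nb_{1}$ after renormalizing $V_{i}$ over $\mathfrak{C}^{\prime}=LR(x)$ gives $V_{i}(\Pb_{i})\ge\tfrac{1}{n/2}\,V_{i}(LR(x))=\tfrac{2}{n}\,V_{i}(LR(x))\ge\tfrac{2}{n}\cdot\tfrac12=\tfrac1n$, with the symmetric bound for $\Nb_{2}$; hence $\Pb$ is proportional, and the merge step has already secured feasibility and contiguity, closing the induction.

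The main obstacle is the clean halving at the inductive step rather than the base case (which is quoted outright from \cite{Igarashi2021EnvyfreeDO}). One must guarantee that a majority switching point exists for an arbitrary even-layered cake and that, because $n$ is even, the agents really do partition into two blocks of exactly $\tfrac{n}{2}$ with the stated one-sided preferences, so that each half-depth subproblem again has equal numbers of layers and agents of the required form $2^{a-1}P^{b}$. Once this is in hand, the valuation bookkeeping that turns $V_{i}(LR(x))\ge\tfrac12$ into the $\tfrac1n$ guarantee, and the transfer of feasibility and contiguity through the merge, are routine.
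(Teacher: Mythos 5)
Your proposal is correct and follows essentially the same route as the paper: the paper's proof (in particular its ``alternative proof'') also inducts on the exponent $a$, uses a majority switching point to split the even-layered cake into the two diagonal pieces with exactly $\frac{n}{2}=2^{a-1}P^{b}$ agents weakly preferring each, merges each diagonal piece into a half-depth cake, invokes the Igarashi--Meunier result for the $P^{b}$ base case, and recombines via the fact that contiguous non-overlapping layered pieces of the merged cakes lift to the original cake. Your write-up is, if anything, more explicit than the paper's on the renormalization step $V_{i}(\Pb_{i})\ge\frac{2}{n}V_{i}(LR(x))\ge\frac{1}{n}$, which the paper leaves implicit.
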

\begin{proof}
Without loss of generality, we assume that every layered cake $L_l$ is a replica of the unit interval $[0,1]$.
We design the following recursive algorithm $\mathcal{A}l$, which accepts a \(|\mathcal{M}^\prime|\)-layered cake $\mathfrak{C}^\prime$ together with a subset \(\mathcal{N}^\prime\) of agents with \(|\mathcal{N}^\prime|\geq 2\) and a valuation profile \((V_i)_{i\in \mathcal{N}^\prime }\) and yields a proportional complete multi-allocation of the cake to the agents that is feasible.\newline
Now consider the case when $m = n = 2^a 3$ for some integers $a\geq 1$. The algorithm $\mathcal{A}l$ searches a majority switching point $x$ over the cake $\mathfrak{C}^\prime$. We let \(\mathcal{I}_1=LR(x,\mathfrak{C}^\prime)\) and \(\mathcal{I}_2=RL(x,\mathfrak{C}^\prime)\). Due to the fact that $n$ is even, and by the definition of a majority switching point, we can split the set of agents \(\mathcal{N}^\prime\) into \(\mathcal{N}_1\) and \(\mathcal{N}_2\) such that \(|\mathcal{N}_1|=|\mathcal{N}_2|=\nicefrac{|\mathcal{N}^\prime|}{2}\), where \(\mathcal{N}_1\) is the set of agents who weakly prefer \(\mathcal{I}_1\) to \(\mathcal{I}_2\) and \(\mathcal{N}_2\) be the set of agents who weakly prefer \(\mathcal{I}_2\) to \(\mathcal{I}_1\). Now we run the algorithm \(\mathcal{A} l\) on the cake $\mathfrak{C}^i$ with the set of agents \(\mathcal{N}_i\) for each $i=1,2,$ respectively, where $\mathfrak{C}^i$ is the merge of \(\mathcal{I}_i\) for $i=1,2$. The complete multi-allocation \(\mathcal{P}=(\mathcal{P}_1,\mathcal{P}_2,...,\mathcal{P}_n)\) returned by $\mathcal{A}l$ attains proportionality in addition to feasibility and contiguity, as we will show via induction on the exponential $a$.\newline
In the case when $a=1$, $m=n=6$  and \(|\mathcal{N}_1|=|\mathcal{N}_2|=3\) occur from the scenario. Thus each $\mathfrak{C}^i$ is $3$-layered cake due to the fact that $\mathfrak{C}^\prime$ is $6$-layered cake and $\mathfrak{C}^i$ is obtained from the merge of a diagonal piece $\mathcal{I}_i$ of $\mathfrak{C}^\prime$ where  \(\mathcal{I}_1=LR(x,\mathfrak{C}^\prime)\) and \(\mathcal{I}_2=RL(x,\mathfrak{C}^\prime)\). Theorem \ref{thm: pro3} yields a feasible and contiguous multi-allocation $\mathcal{P}^i$ over $\mathfrak{C}^i$ for the set of agents \(\mathcal{N}_i\) that satisfies proportionality condition. Each agent $p\in \mathcal{N}_i$ receives a value of at least $\frac{V_p(\mathcal{I}_i)}{|\mathcal{N}_i|}\geq \frac{V_p(\mathfrak{C}^\prime)}{2|\mathcal{N}_i|}=\frac{1}{2P^b}=\frac{1}{n}$. Therefore, as stated in Lemma \ref{lem: nonoverlapping}, merging both multi-allocations can result in a contiguous and feasible complete multi-allocation that ensures each agent a proportional share.
\newline 
Assume that the claim is true for $m =n=2^a3$ with $1\leq a\leq k$; we will prove it for $a=k+1$ and $b\geq 1$. If $a=k+1$, then $m=n=2^{k+1}3$. Assume that the algorithm $\mathcal{A}l$ partitions the input cake $\mathfrak{C}^\prime$ into 
\(\mathcal{I}_1=LR(x,\mathfrak{C}^\prime)\) and 
\(\mathcal{I}_2=RL(x,\mathfrak{C}^\prime)\) making use of the majority switching point x. Suppose that the agents are divided into two groups (\( \mathcal{N}_1 \), \( \mathcal{N}_2 \)) where \(\mathcal{N}_1\) is the set of agents who weakly prefer \(\mathcal{I}_1\) to \(\mathcal{I}_2\) and \(\mathcal{N}_2\) be the set of agents who weakly prefer \(\mathcal{I}_2\) to \(\mathcal{I}_1\). Notice that the set of agents \(\mathcal{N}_1\) is the set of agents who weakly prefer 
\(\mathcal{I}_1\) to \(\mathcal{I}_2\), which results in $V_i(\mathcal{I}_1)\geq \frac{V_i(\mathfrak{C}^\prime)}{2}$ for all 
$i\in \mathcal{N}_1$. Similarly, $V_i(\mathcal{I}_2)\geq \frac{V_i(\mathfrak{C}^\prime)}{2}$ for all 
$i\in \mathcal{N}_2$. Thus, by the induction hypothesis, each agent $i$ has value at least $\frac{V_i(\mathfrak{C}^\prime)}{|\mathcal{N}^\prime|}$ for its assigned layered piece $\mathcal{P}_i$. According to the induction hypothesis, the algorithm $\mathcal{A}l$ generates a feasible and contiguous multi-allocation for each merge. Lemma \ref{lem: nonoverlapping} implies that every non-overlapping and contiguous layered piece of the merge of $\mathcal{I}_i$ is also a non-overlapping and contiguous layered piece of the original cake. Thus, the algorithm gives a proportional complete multi-allocation that is contiguous and feasible as an output.
\end{proof}

We will further develop the aforementioned theorem to the situation in which there are strictly more agents than layers. When $n>m$, it comes to intuitive understanding that there is at least one layer whose sub-piece may be "safely" assigned to a particular agent without leaving the non-overlapping condition. In the following theorem, we show that feasible and contiguous proportional allocation can be computed efficiently when $m<n$, where the number $m$ of layers is of the form $2^a3$ and $a$ is a positive integer number.
\begin{theorem}\label{thm: grat23}
    There exists a proportional multi-allocation over $m$-layered cake for any number $n\geq m$ of agents that is feasible and contiguous, where $m$ is of form $2^a3$ for all $a\in \mathbb{Z}_+$.
\end{theorem}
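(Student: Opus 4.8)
The plan is to prove the statement by induction on the number of agents $n$, using Theorem 9 (the case $n=m=2^aP^b$) as the base case and a single-layer moving-knife reduction for the inductive step. This mirrors Theorem 8, which extends the $n=m=2^a3^b$ result of Theorem 7 to $n\geq m$; here the same reduction is bootstrapped from Theorem 9. So I would fix $m=2^aP^b$, take the base case $n=m$ from Theorem 9, and for $n>m$ reduce the problem to one with $n-1\geq m$ agents on an $m$-layered cake of the same form, to which the inductive hypothesis applies.

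First I would locate a layer that some agent values enough to cut a proportional sliver from. Since every agent $i$ satisfies $\sum_{l=1}^m V_{il}(\Lb_l)=1$ over $m$ layers, some layer has $V_{il}(\Lb_l)\geq\frac1m\geq\frac1n$ (using $n\geq m$); hence there is a layer $l^\ast$ for which the set $A=\{i\in\Nb: V_{i,l^\ast}(\Lb_{l^\ast})\geq\frac1n\}$ is nonempty. Each agent $i\in A$ marks the minimal point $y_i$ with $V_{i,l^\ast}(\Lb_{l^\ast}\cap[0,y_i])=\frac1n$, which exists because $y\mapsto V_{i,l^\ast}(\Lb_{l^\ast}\cap[0,y])$ is continuous and non-atomic and rises from $0$ to at least $\frac1n$. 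I then let $p\in A$ attain $y_p=\min_{i\in A}y_i$ and award the single-layer piece $\Lb_{l^\ast}\cap[0,y_p]$ to agent $p$, whose value for it is exactly $\frac1n$.

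The key verification is that this sliver is worth at most $\frac1n$ to every other agent, and this is the step I expect to require the most care. For $j\in A\setminus\{p\}$ it follows from $y_p\leq y_j$ and monotonicity, so $V_{j,l^\ast}(\Lb_{l^\ast}\cap[0,y_p])\leq V_{j,l^\ast}(\Lb_{l^\ast}\cap[0,y_j])=\frac1n$. The delicate point is an agent $j\notin A$, who cannot place a mark at all; the resolution is that such an agent needs no mark, since it already undervalues the entire layer, $V_{j,l^\ast}(\Lb_{l^\ast})<\frac1n$, and therefore values any sub-sliver by less than $\frac1n$. Hence every $j\neq p$ values the remaining cake $\mathfrak{C}'$ — namely $\mathfrak{C}$ with $\Lb_{l^\ast}$ shortened to the interval $\Lb_{l^\ast}\cap[y_p,1]$ — at least $\frac{n-1}{n}$.

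Finally $\mathfrak{C}'$ is again an $m$-layered cake with $m=2^aP^b$ and each layer a single interval, now to be shared by the $n-1\geq m$ agents $\Nb\setminus\{p\}$. After renormalizing each remaining agent's valuations by its positive value $V_j(\mathfrak{C}')\geq\frac{n-1}{n}$, the inductive hypothesis (Theorem 9 at the base $n-1=m$, and the induction otherwise) yields a proportional, feasible, contiguous multi-allocation of $\mathfrak{C}'$ in which agent $j$ receives at least $\frac{1}{n-1}V_j(\mathfrak{C}')\geq\frac{1}{n-1}\cdot\frac{n-1}{n}=\frac1n$. Adjoining agent $p$'s sliver gives a complete multi-allocation $\Pb$ for all $n$ agents, each getting at least $\frac1n$. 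Feasibility and contiguity persist: $p$'s share is one contiguous interval on a single layer, hence trivially non-overlapping, and on layer $l^\ast$ it occupies $[0,y_p]$ while every piece from the recursion lies in $[y_p,1]$, so no horizontal overlap is created; across the recursion the successively removed slivers are nested left-segments of their layers and stay pairwise disjoint. This closes the induction and proves the theorem.
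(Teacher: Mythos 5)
Your proposal is correct and follows essentially the same route as the paper's own proof: use the $n=m=2^aP^b$ result as the base case, then for $n>m$ have agents mark minimal points $y_i$ on a suitable layer worth exactly $\frac{1}{n}$, award the leftmost sliver $\Lb_{l^\ast}\cap[0,y_p]$ to the minimizing agent, and recurse on the shortened cake with $n-1$ agents, with the remaining agents valuing it at least $\frac{n-1}{n}$. Your treatment is in fact slightly more careful than the paper's, since you explicitly handle agents who value the chosen layer below $\frac{1}{n}$ (they cannot place a mark, but also cannot overvalue the sliver), a point the paper's proof passes over when it asks every agent to mark a point.
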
 
\begin{proof}
We design the following recursive algorithm $\mathcal{A}l$, which accepts a \(|\mathcal{M}^\prime|\)-layered cake $\mathfrak{C}^\prime$ together with a subset \(\mathcal{N}^\prime\) of agents with \(|\mathcal{N}^\prime|\geq 2\) and a valuation profile \((V_i)_{i\in \mathcal{N}^\prime }\) and yields a proportional complete multi-allocation of the cake to the agents that is feasible.\newline
When $n=m$, we apply the algorithm outlined in Theorem \ref{thm: 23}'s proof. In the case when $n>m$, $m=2^a3$ for some integers $a$, the algorithm finds a layer $L_l$ where at least some agents have values of at least $\frac{1}{n}$ on $L_l$. Without loss of generality, we assume that $l=1$. The algorithm then instructs each agent $a_i$ to place a mark at point $y_i$ on cake layer $L_1$  so that the value of the piece $Y=L_1\cap [0,y_i]$ is equal to $\frac{1}{n}$ and allocate the piece $Y=L_1\cap [0,y]$ to agent $p$ where $y=y_p=$min$\{y_1,y_2,y_3,.....,y_n\}$. In order to decide how to share the remaining cake, we perform the algorithm $\mathcal{A} l$ to the reduced instance ($\mathcal{N}^\prime\backslash\{p\}$, $(L_l^\prime)_{l\in \mathcal{M}}$, $(V_i)_{i\in \mathcal{N}^\prime\backslash\{p\}}$) where $L_1^{\prime}=L_1\backslash Y$ and $L_l^{\prime}=L_l$ for $l\neq 1$.
We will show via induction on $\mathcal{N}^\prime$ that the complete multi-allocation $\mathcal{P}=(\mathcal{P}_1,\mathcal{P}_2,.....,\mathcal{P}_n)$ returned by $\mathcal{A} l$ follows proportionality as well as feasibility and contiguity. Due to Theorem \ref{thm: 23}, this is true for $|\mathcal{N}^\prime|=m$.
For $m\leq |\mathcal{N}^\prime|\leq k-1$, let's assume that the claim is true.
We will now show that the claim also holds for $|\mathcal{N}^\prime|=k$. Assume that agent $a$ gets the contiguous piece $Y$. Clearly, agent $a$ receives a proportional value under $\mathcal{P}$.
Observe that for the remaining cake, all remaining agents have a value of at least $\frac{|\mathcal{N}^\prime|-1}{|\mathcal{N}^\prime|}V_i(\mathfrak{C}^\prime)$. Thus, by the induction hypothesis, each agent $i\neq a$ has value at least $\frac{V_i(\mathfrak{C}^\prime)}{|\mathcal{N}^\prime|}$ for its assigned layered piece $\mathcal{P}_i$. The induction hypothesis makes it obvious that $\mathcal{P}$ is both feasible and contiguous. The proof has concluded with the above.  
\end{proof}

\section{Discussion}\label{sec:discuusion}
We study the problem of multi-layered cake cutting, where we divide multi-layered cake among a set of agents under two constraints, feasibility and contiguity. In section \ref{sec:exact}, we propose a new computational model. Then, we show the existence of an exact feasible multi-allocation for two agents and two layers using the new computational model. In section \ref{sec:proportional}, we show that a proportional multi-allocation can be computed for three layers and any number of agents greater than three using the cut-and-eval queries. We also show a technique for computing a proportional allocations for any number $n\geq 2^a3$ of agents and $2^a3$ layers, where $a$ is any positive integer.
Igarashi and Meunier \cite{25igarashi2021envy} show the existence of a feasible and contiguous proportional multi-allocation for any number $m$ of layers and any number $n\geq m$ of agents. But our results are based on the computation of proportional multi-allocation for various agents and layers.

There are several directions that are open for future research. 
    
    $\bullet$ \textbf{Query complexity of envy-free multi-allocation :}
    In the multi-layered cake-cutting problem, the query complexity of finding a feasible multi-allocation that is envy-free is open.

   $\bullet$  \textbf{Computation of proportional multi-allocation :} 
   Igarashi and Meunier \cite{25igarashi2021envy} show the existence of a feasible and contiguous proportional multi-allocation for any number $m$ of layers and any number $n\geq m$ of agents. Hosseini et al. \cite{20Hosseini2020FairDO} give a computational procedure of finding a feasible and contiguous multi-allocation for $n\geq m$ agents and $m=2^a$ layers, where $a$ is a positive integer. We extend the result for $n\geq m$ agents and $m$ layers, where $m=3,2^a3$ and $a$ is a positive integer. Extending the result for an arbitrary $m$ is unsolved.
   
   $\bullet$  \textbf{Existence of exact multi-allocation :} Austin
   \cite{29austin1982sharing} shows the existence of an exact allocation for a single-layered cake and two agents. We show the existence of an exact feasible multi-allocation for two layers and two agents. Alon \cite{alon1987splitting} shows the existence of an exact allocation for a single-layered cake. Finding the existence of an exact feasible multi-allocation for $n\geq m$ agents and $m$ layers is open. 

   $\bullet$ \textbf{Efficiency of multi-allocation :} Caragiannis et al. \cite{7caragiannis2012efficiency} explore how allocation efficiency is affected by fairness. They take into account three distinct concepts of fairness for the allocations of divisible and indivisible goods and chores: proportionality, envy-freeness, and equitability. In comparison to optimal allocations, fair allocations lose efficiency. They quantify this loss and demonstrate the price of justice under three different concepts. Aumann and Dombb \cite{6aumann2015efficiency} examine how fairness criteria may lead to a decrease in social welfare, focusing mostly on a scenario where each agent requires a connected piece. The study of efficiency in multi-layered cakes is another direction of work.
\section*{Acknowledgements}
The author would like to thank Bodhayan Roy for his suggestions, which have improved the presentation of this paper significantly. 
\section*{Declaration of competing interest}
The author declares that he has no known competing financial interests or personal relationships that could have appeared to influence the work reported in this paper.
\section*{Data availability}
No data was used for the research described in the article.

\section*{Appendix}
\subsection*{Exact multi-layered cake cutting}
\subsubsection*{The m (even)-layered cut} Initially, the issue of why we define another division is highlighted. In Section $\ref{sec:exact}$, our goal is to determine an exact multi-allocation that is feasible. When both agents have the same valuation functions, corresponding to specific to this case, we are able to obtain an exact multi-allocation that is both feasible and contiguous. Otherwise, we never get an exact multi-allocation that meets the two criteria. The following question therefore arises: Is there an exact multi-allocation that satisfies the condition of feasibility?
Suppose that there is an exact multi-allocation that meets the feasibility condition and that both agents have different valuation functions. In light of this, the multi-allocation $\mathcal{P}=(\mathcal{P}_1,\mathcal{P}_2)$  that meets the feasibility constraint may have the form $\mathcal{P}_1=(([0,x]\cap L_1)\cup([y,1] \cap L_1) ,[x,y]\cap L_2)$ and $\mathcal{P}_2=([x,y]\cap L_1,([0,x]\cap L_2)\cup ([y,1]\cap L_2))$.
\begin{center}
    \begin{tikzpicture}
         \draw[thick,-] (0,0)--(5,0);
         \draw[-](0,1)--(5,1);
         \draw[-](0,0)--(0,1);
         \draw[-](5,0)--(5,1);
         \draw[-](0,.5)--(5,.5);
         \draw[-](1.8,0)--(1.8,1);
         \draw[-](3.5,0)--(3.5,1);
         \node at (.9,.75) {$P_{11}$};
         \node at (.9,.25) {$P_{22}$};
         \node at (2.65,.25) {$P_{12}$};
         \node at (2.65,.75){$P_{21}$};
         \node at (4.25,.75){$P_{22}$};
         \node at (4.25,.25){$P_{11}$};
         \node at (-.4,.25){$L_2$};
         \node at (-.4,.75){$L_1$};
         \node at (1.8,1.2){$x$};
         \node at (3.5,1.2){$y$};
         \node at (2.5,-.3) {Figure 17 : $\mathcal{P}=(\mathcal{P}_1,\mathcal{P}_2)$};
\end{tikzpicture} 
\end{center}
Our defined partition and computational model are very relevant to the switching point property and the Austin moving-knife procedure.

In the Austin moving-knife procedure, we start by moving two knives over a single-layered cake from positions 0 and p so that the piece between the two knives is always half with respect to agent 1, where p divides the cake into equal-valued pieces with respect to agent 1.
\begin{center}
    \begin{tikzpicture}
        \draw[blue, very thick] (0,0) rectangle (5,1);
        \draw[black, very thick] (0,-1)--(0,2);
        \draw[black, very thick] (2,-1)--(2,2);
         \node at (-0.2,1.2) {$0$};
         \node at (2.2,1.2) {$p$};
         \node at (5,1.2) {$1$};
         \node at (0,-1.2) {1st Knife};
         \node at (2,-1.2) {2nd knife};
         \node at (2.5,-1.7) {Figure 18 : Initial positions of the two knives.};
    \end{tikzpicture}
\end{center}
The movement of those two knives will come to an end at points $p$ and $1$, respectively, since point $p$ divides the cake into two equal-valued portions with regard to agent 1.
\begin{center}
    \begin{tikzpicture}
        \draw[blue, very thick] (0,0) rectangle (5,1);
        \draw[black, very thick] (5,-1)--(5,2);
        \draw[black, very thick] (2,-1)--(2,2);
         \node at (0,1.2) {$0$};
         \node at (2.2,1.2) {$p$};
         \node at (5.2,1.2) {$1$};
         \node at (2,-1.2) {1st knife};
         \node at (5,-1.2) {2nd knife};
         \node at (2.5,-1.7) {Figure 19 : Terminal positions of the two knives.};
    \end{tikzpicture}
\end{center}
When the second agent believes the value of the piece between the two knives is half, he will order the movement of the knives to stop. The intermediate value theorem demands that this situation occur.
\begin{center}
    \begin{tikzpicture}
        \draw[blue, very thick] (0,0) rectangle (5,1);
        \draw[black, very thick] (4,-1)--(4,2);
        \draw[black, very thick] (1.5,-1)--(1.5,2);
         \node at (0,1.2) {$0$};
         \node at (1.7,1.2) {$x$};
         \node at (4.2,1.2) {$y$};
         \node at (5,1.2) {$1$};
         \node at (1.5,-1.2) {1st knife};
         \node at (4,-1.2) {2nd knife};
         \node at (2.5,-1.7) {Figure 20 : Positions of the two knives when exactness happens.};
    \end{tikzpicture}
\end{center}
\subsubsection*{Our Procedure to show the existence of an exact feasible multi-allocation}
Similar to the Austin moving-knife procedure, we define a pair of knives to divide the cake under feasibility constraints. Suppose that $s_1$ is a switching point of agent 1. Due to the definition of the switching point, we obtain $V_1(LR(s_1))=V_1(RL(s_1))=\nicefrac{1}{2}$. Therefore, $V_1(TLR(0,s_1))=V_1(RL(s_1))=\nicefrac{1}{2}$ and $V_1(TLR(s_1,1))=V_1(LR(s_1))=\nicefrac{1}{2}$.
\begin{center}
    \begin{tikzpicture}
         \draw[thick,-] (0,0)--(3,0);
    \draw[-](0,1)--(3,1);
    \draw[-](0,.5)--(3,.5);
    \draw[-](0,0)--(0,1);
    \draw[-](3,0)--(3,1);
    \draw[-](1.5,0)--(1.5,1);
    \node at (0,1.2) {$x=0$};
    \node at (1.5,1.2){$y=s_1$};
    \node at (-.4,.25){$L_2$};
    \node at (-.4,.75){$L_1$};
    \node at (.75,.75){$LR(s_1)$};
    \node at (.75,.25){$RL(s_1)$};
    \node at (2.25,.25){$LR(s_1)$};
    \node at (2.25,.75){$RL(s_1)$};
    \node at (1.5,-.3) {Figure 21: Partition for the pair $(0,s_1)$.};
    \end{tikzpicture}
\end{center}
\subsubsection*{Explanation}
Similar to the Austin moving-knife procedure, we begin to move the pair of knives $K_1$ and $K_2$ from positions $0$ and $s_1$ such that the value of the piece $TLR(0, s_1)$ is half for agent 1, where $s_1$ is a switching point of agent 1.
\begin{center}
    \begin{tikzpicture}
         \draw[thick,-] (0,0)--(5,0);
         \draw[-](0,1)--(5,1);
         \draw[blue,-](0,-1)--(0,2);
         \draw[-](5,0)--(5,1);
         \draw[-](0,.5)--(5,.5);
         \draw[blue,-](2.5,-1)--(2.5,2);
         \node at (-.4,.25){$L_2$};
         \node at (-.4,.75){$L_1$};
         \node at (.4,1.2) {$x=0$};
         \node at (3,1.2) {$y=s_1$};
         \node at (2.5,-1.7) {Figure 22: Initial positions of the pair of knives};
\end{tikzpicture} 
\end{center}

We continuously move these two knives such that the piece $TLR(x,y)$ is valued at half with respect to agent 1, where $x\leq y$. It is always possible to move the pair of knives while keeping the value on the piece $TLR(x,y)$ at half with respect to agent 1 because we simultaneously move the pair of knives according to the valuations of agent 1. The terminal locations of the knives $K_1$ and $K_2$ are at points $s_1$ and $1$, respectively. 
\begin{center}
    \begin{tikzpicture}
         \draw[thick,-] (0,0)--(5,0);
         \draw[-](0,1)--(5,1);
         \draw[blue,-](5,-1)--(5,2);
         \draw[-](0,0)--(0,1);
         \draw[-](0,.5)--(5,.5);
         \draw[blue,-](2.5,-1)--(2.5,2);
         \node at (-.4,.25){$L_2$};
         \node at (-.4,.75){$L_1$};
         \node at (5.4,1.2) {$y=1$};
         \node at (3,1.2) {$x=s_1$};
         \node at (2.5,-1.7) {Figure 23: Terminal positions of the pair of knives};
\end{tikzpicture} 
\end{center}

While we continuously move these two knives, keeping the value on the piece $TLR(x,y)$ always half with regard to agent 1, agent 2 can sometimes say "stop" when he thinks the value on the piece $TLR(x,y)$ is half. The intermediate value theorem implies that this situation occurs.
\begin{center}
    \begin{tikzpicture}
         \draw[thick,-] (0,0)--(5,0);
         \draw[-](0,1)--(5,1);
         \draw[-](0,0)--(0,1);
         \draw[-](5,0)--(5,1);
         \draw[-](0,.5)--(5,.5);
         \draw[blue,-](1.8,-1)--(1.8,2);
         \draw[blue,-](3.5,-1)--(3.5,2);
         \node at (.9,.75) {$TLR$};
         \node at (.9,.25) {$TRL$};
         \node at (2.65,.25) {$TLR$};
         \node at (2.65,.75){$TRL$};
         \node at (4.25,.75){$TLR$};
         \node at (4.25,.25){$TRL$};
         \node at (-.4,.25){$L_2$};
         \node at (-.4,.75){$L_1$};
         \node at (1.6,1.2){$x$};
         \node at (3.7,1.2){$y$};
         \node at (1.8,-1.3) {$K_1$};
         \node at (3.8,-1.3) {$K_2$};
         \node at (2.5,-1.7) {Figure 24: Exact multi-allocation};
\end{tikzpicture} 
\end{center} 
 
\bibliographystyle{unsrt}
\bibliography{references}  

\end{document}